\newtheorem{theorem}{Theorem}
\newtheorem{proposition}[theorem]{Proposition}
\newtheorem{lemma}[theorem]{Lemma}
\newtheorem{definition}{Definition}
\newtheorem{assumption}{Assumption}
\newtheorem{remark}{Remark}
\definecolor{puorange}{rgb}{0.80,0.20,0}
\definecolor{bluegray}{rgb}{0.04,0,0.7}
\definecolor{greengray}{rgb}{0.05,0.50,0.15}
\definecolor{darkbrown}{rgb}{0.40,0.2,0.05}
\definecolor{darkcyan}{rgb}{0,0.4,1}
\definecolor{black}{rgb}{0,0,0}
\definecolor{grey}{rgb}{0.93,0.93,0.93}
\definecolor{royalazure}{rgb}{0.0, 0.22, 0.66}
\Crefname{assumption}{Assumption}{Assumption}
\Crefname{proposition}{Proposition}{Proposition}
\newenvironment{customasmp}[1]
  {\innercustomasmp}
  {\endinnercustomasmp}
\tikzset{
	-latex,auto,node distance = 1.3cm and 1.3cm,
	semithick ,
	state/.style ={ellipse, draw, minimum width = 0.5 cm},
	el/.style = {inner sep=2pt, align=left, sloped},
	point/.style = {circle, draw=black, inner sep=0.04cm,
     fill=black,node contents={}},
    bidirected/.style={Latex-Latex,dashed},
      square/.style={rectangle, 
                 draw = black, 
                 inner sep = 0.08cm,
                 node distance = 0.8cm and 0.8cm,
                 fill=black,  
                 node contents={}}
}
\newcommand{\reals}{{\mathbb R}}
\newcommand{\abs}[1]{\left| #1 \right|}
\newcommand{\norm}[1]{\left\lVert #1 \right\rVert}
\newcommand{\anorm}[1]{\left| #1 \right|}
\newcommand{\Rank}{\operatorname{\bf Rank}}
\newcommand{\Tr}{\operatorname{\bf Tr}}
\newcommand{\ip}[1]{{\langle #1 \rangle}}
\newcommand \D {\mathrm{d}}
\newcommand{\Expect}{\operatorname{\mathbb E}}
\newcommand{\Var}{\operatorname{\mathbb{V}ar}}
\newcommand{\Cov}{\operatorname{\mathbb{C}ov}}
\newcommand{\Prob}{\operatorname{\mathbb P}}
\newcommand{\indep}{\perp \!\!\! \perp}
\newcommand{\calA}{\mathcal{A}}
\newcommand{\calB}{\mathcal{B}}
\newcommand{\calC}{\mathcal{C}}
\newcommand{\calF}{\mathcal{F}}
\newcommand{\calG}{\mathcal{G}}
\newcommand{\calK}{\mathcal{K}}
\newcommand{\calN}{\mathcal{N}}
\newcommand{\calX}{\mathcal{X}}
\newcommand{\calZ}{\mathcal{Z}}
\newcommand{\bbN}{{\mathbb N}}
\newcommand{\ind}{\mathds{1}}
\newcommand{\txtover}[2]{\overset{\mbox{\scriptsize #1}}{#2}}
\newcommand{\erisk}{\mathcal{E}}
\newcommand{\sample}{\mathcal{D}}
\newcommand{\rmD}{\mathrm{D}}
\title{Orthogonal Statistical Learning with Self-Concordant Loss}
\author{Lang Liu \qquad Carlos Cinelli \qquad Zaid Harchaoui \vspace{0.3cm} \\
Department of Statistics, University of Washington}
\date{}
\begin{document}

\maketitle

\begin{abstract}
  Orthogonal statistical learning and double machine learning have emerged as general frameworks for two-stage statistical prediction in the presence of a nuisance component. We establish non-asymptotic bounds on the excess risk of orthogonal statistical learning methods with a loss function satisfying a self-concordance property. Our bounds improve upon existing bounds by a dimension factor while lifting the assumption of strong convexity. We illustrate the results with examples from multiple treatment effect estimation and generalized partially linear modeling.
\end{abstract}

\section{Introduction}\label{sec:intro}

As statistical machine learning impacts several domain applications of major importance to the planet and society, ranging from healthcare to the environment, sophisticated approaches to estimation, proceeding in multiple stages, are being developed to overcome confounding factors and to address high-dimensional nuisance parameters~\citep{peters2017elements}. Orthogonal statistical learning (OSL), and its statistical estimation predecessor double machine learning (DML), have emerged as general frameworks for two-stage statistical machine learning in the presence of a nuisance component~\citep{mackey2018orthogonal,liu2021double,nekipelov2022regularised}. 

The power of this framework can be illustrated on the task of 
assessing the causal effect of a treatment on an outcome of interest. Let $Z := (Y, D, X)$ be a vector of observed variables, where $Y \in \reals$ is the outcome, $D \in \{0, 1\}$ is the treatment, and $X \in \reals^p$ is a vector of features. Denote by $Y(\mathrm{d})$ the potential outcome of $Y$ when the treatment variable $D$ is set (by intervention) to be $\mathrm{d} \in \{0, 1\}$. Our goal is to estimate the average treatment effect (ATE) of $D$ on $Y$, defined as $ \theta_0 := \Expect[Y(1) - Y(0)]$.

If the treatment assignment $D$ is conditionally ignorable (unconfounded) given $X$; or, equivalently, if the  set of features $X$ satisfy the ``backdoor'' (adjustment) criterion for estimating the causal effect of $D$ on $Y$ (see Figure~\ref{fig:unconfoundedness} for an illustrative causal diagram), a well known identification result in the causal inference literature is that the ATE $\theta_0$ can be identified as a functional of the conditional expectation function (CEF) of the outcome \citep{Rosenbaum1983,pearl2009causality,shpitser:apa2012,imbens2015causal,hernan2010causal}.
To be more concrete, we obtain that $\theta_0 = \Expect\big[\Expect[Y \mid D = 1, X] - \Expect[Y \mid D = 0, X] \big]$.
Note that, in order to estimate $\theta_0$, which is a scalar, we may need to learn the potentially infinite dimensional nuisance $g := (g_0, g_1)$ where $g_d := \Expect[Y \mid D = \mathrm{d}, X]$.
This type of challenges, where in order to learn about the target of inference, one needs to estimate many quantities that are not of primary interest, is the one OSL and DML both seek to address. 

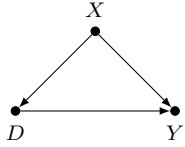
\begin{figure*}[t]
\centering
    \begin{tikzpicture}
        \node (z) [label = above:$X$, point];
        \node (x) [label = below:$D$, below left of = z, point, node distance = 1.5cm];
        \node (y) [label = below:$Y$, point, below right of = z, node distance = 1.5cm];
        \path (z) edge  (x);
        \path (x) edge  (y);
        \path (z) edge  (y);
    \end{tikzpicture}%
    \caption{Causal diagram in which $X$ satisfies the backdoor criterion for the causal effect of $D$~on~$Y$. Here, unconfoundedness holds conditional on $X$, that is, $Y(\mathrm{d}) \indep D \mid X$.}
    \vspace{-0.2in}
\label{fig:unconfoundedness}
\end{figure*}

We work in the framework of OSL and state our results in terms of excess risk in the spirit of statistical learning theory.
Formally, let $\sample := \{Z_1, \dots, Z_{2n}\}$ be an i.i.d.~sample of size $2n$ from an unknown distribution $\Prob$ on $\mathcal{Z}$.
We are interested in learning parameters from the model $\mathcal{M}_{\theta, g}$ equipped with some loss function $\ell(\theta, g; z)$, where $\theta \in \Theta \subset \mathbb{R}^d$ is the \emph{target parameter} and $g \in (\mathcal{G}, \norm{\cdot}_\calG)$ is the \emph{nuisance parameter} which may be infinite dimensional.
Define the \emph{population risk} at $(\theta, g)$ as
$L(\theta, g) := \Expect_{Z \sim \Prob}[\ell(\theta, g; Z)]$.
We will assume throughout that $\ell$ is three times differentiable w.r.t.~$\theta$ and twice differentiable w.r.t.~$g$.

Following \cite{foster2020orthogonal}, we assume that there exists a true nuisance parameter $g_0 \in \calG$.
Without access to $g_0$, we aim to learn an estimator $\hat \theta$ that minimizes the \emph{excess risk},
\begin{align}\label{eq:excess_risk}
    \erisk(\theta, g_0) := L(\theta, g_0) - \inf_{\theta \in \Theta} L(\theta, g_0).
\end{align}
We assume that the infimum in the excess risk is attainable at a minimizer $\theta_\star$ and the Hessian of $L(\cdot, g_0)$ at $\theta_\star$ is invertible. Consequently, we can rewrite \eqref{eq:excess_risk} as
\begin{align*}
    \erisk(\theta, g_0) = L(\theta, g_0) - L(\theta_\star, g_0).
\end{align*}

We focus on the two-stage learning procedure with sample splitting from~\citet{foster2020orthogonal}; see also \cite{chernozhukov2018double}.
Denote by $\sample_1 := \{Z_i\}_{i=1}^n$ the first sample split, and by $\sample_2 := \{Z_i\}_{i=n+1}^{2n}$ the second sample split. The estimator we study in this paper is constructed from the following algorithm.

\vspace{0.1in}
\noindent\fbox{
  \parbox{0.965\textwidth}{
    \textbf{OSL Meta-Algorithm}
    \begin{itemize}
        \item \emph{Nuisance parameter.} The first stage learning algorithm takes $\sample_2$ as input and outputs an estimator $\hat g$.
        \item \emph{Target parameter.} The second stage learning algorithm solves the minimization problem
        \begin{align}
            \min_{\theta \in \Theta} L_n(\theta, \hat g) := \frac1n \sum_{i=1}^n \ell(\theta, \hat g; Z_i)
        \end{align}
        and outputs an estimator $\hat \theta$.
    \end{itemize}
  }
}
\vspace{0.1in}

The main contribution of this paper is establishing non-asymptotic guarantees on the excess risk $\erisk(\hat \theta, g_0)$ for the OSL estimator $\hat{\theta}$ under a uniform self-concordance assumption, allowing the dimension of the target parameter to grow at the rate $d = O(n^{1/2})$. In particular, Theorems~\ref{thm:risk_bound}~and~\ref{thm:risk_bound_slow} derive novel non-asymptotic bounds for the excess risk and characterize its convergence as $n \rightarrow \infty$, both in a ``fast'' and ``slow'' regime. Compared to previous work, such as \citep{foster2020orthogonal}, these new bounds depend on the ``effective dimension'' as defined by the trace of the sandwich covariance matrix, and recover guarantees that were only available to supervised learning without a nuisance parameter. Effectively, this improves  prior  bounds on the excess risk  at least by a factor of $d$ in a wide range of eigendecay regimes.

In what follows, Section~\ref{sec:thms} provides the main definitions, assumptions, and establishes the main results of this paper. Section~\ref{sec:discussion} provides further discussions on the converge rate and how our work relates to existing literature.
\Cref{sec:application} examines concrete examples such as treatment effect estimation in a partially linear model and semi-parametric logistic regression.
Finally, Section~\ref{sec:conclusion} offers some concluding remarks. The full proofs are collected in the Appendix sections.

\section{Main Results}\label{sec:thms}
We first introduce the notation and some key definitions.
We then present all the assumptions required by our analysis.
Finally, we summarize our main results and their proof sketches.

\subsection{Preliminaries}

\paragraph{Notation.}
Let $S(\theta, g; z) := \nabla_\theta \ell(\theta, g; z)$ be the gradient at $z$ and $H(\theta, g; z) := \nabla_\theta^2 \ell(\theta, g, z)$ be the Hessian at $z$.
We also call $S(\theta, g; z)$ the score at $z$ which is named after the \emph{likelihood score} in maximum likelihood estimation.
Their population counterparts are $S(\theta, g) := \Expect_{Z \sim \Prob}[S(\theta, g; Z)]$ and $H(\theta, g) := \Expect_{Z \sim \Prob}[H(\theta, g; Z)]$.
We assume standard regularity assumptions so that $S(\theta, g) = \nabla_\theta L(\theta, g)$ and $H(\theta, g) = \nabla_\theta^2 L(\theta, g)$.
Moreover, we let $G(\theta, g) := \Cov_{Z \sim \Prob}(S(\theta, g; Z))$ be the covariance matrix of the score $S(\theta, g; Z)$.
For simplicity of the notation, we let $S_\star := S(\theta_\star, g_0)$, $G_\star := G(\theta_\star, g_0)$, and $H_\star := H(\theta_\star, g_0)$.
We define their empirical quantities as $S_n(\theta, g) := \frac1n \sum_{i=1}^n S(\theta, g; Z_i)$, $H_n(\theta, g) := \frac1n \sum_{i=1}^n H(\theta, g; Z_i)$, and
\begin{align*}
    G_n(\theta, g) := \frac1n \sum_{i=1}^n [S(\theta, g; Z_i) - S(\theta, g)] [S(\theta, g; Z_i) - S(\theta, g)]^\top.
\end{align*}
Our analysis is local to a \emph{Dikin ellipsoid} at $\theta_\star$ of radius $\bar r_1 := r_1 \sqrt{\lambda_{\min}(H_*)}$ and a ball at $g_0$ of radius $r_2$, i.e.,
\begin{align*}
    \Theta_{\bar r_1}(\theta_\star) := \{\theta \in \Theta: \norm{\theta - \theta_\star}_{H_\star} < \bar r_1\} \quad \mbox{and} \quad \calG_{r_2}(g_0) := \{g \in \calG: \norm{g - g_0}_{\calG} < r_2\},
\end{align*}
where, given a positive semi-definite matrix $J$, we let $\norm{x}_J := \norm{J^{1/2} x}_2 = \sqrt{x^\top J x}$.

\paragraph{Effective dimension.}
The quantity that plays a central role in our analysis is the \emph{profile effective dimension} defined as follows.
The term \emph{profile} is used in the same sense as in the profile likelihood literature; see, e.g., \cite{murphy2000profile}.
\begin{definition}
    We define the profile effective dimension to be
    \begin{align}
        \bar d_\star := \sup_{g \in \calG_{r_2}(g_0)} \Tr(H_\star^{-1/2} G(\theta_\star, g) H_\star^{-1/2}).
    \end{align}
\end{definition}
When the model is well-specified, we have $H_\star = G_\star$ and thus $\bar d_* \approx d$.
When the model is mis-specified, it corresponds to the mismatch between the covariance matrix $G_*$ and the Hessian matrix $H_*$.
It can be either as small as a constant or as large as exponential in $d$ depending on the eigendecays of $G_*$ and $H_*$; see \Cref{sec:discussion} for more details.

\paragraph{Self-concordance.}
We shall use the notion of \emph{self-concordance} from convex optimization.
Self-concordance was introduced to analyze the interior-point and Newton-type convex optimization algorithms~\citep{yurii1994interior}.
\citet{bach2010self} introduced a modified version, which we call the \emph{pseudo self-concordance}, to derive non-asymptotic bounds for the logistic regression.
We focus here on the pseudo self-concordance.
For a functional $F$ mapping from a vector space $\calF$ to $\reals$, we define the \emph{derivative operator} $\rmD$ as $\rmD F(f)[h] := \frac{\D }{\D t} F(f + th) \vert_{t = 0}$ for $f, h \in \calF$.

\begin{definition}\label{def:modify_self_concord}
    Let $\calX \subset \reals^d$ be open and $f: \calX \rightarrow \reals$ be a closed convex function.
    We say $f$ is pseudo self-concordant with parameter $R$ on $\calX$ if
    \begin{align*}
        \abs{\rmD^3 f(x)[u, u, u]} \le R \norm{u}_2 \rmD^2 f(x)[u, u], \quad \mbox{for all } x \in \calX, u \in \reals^d.
    \end{align*}
\end{definition}

\begin{figure}[t]
    \centering
    \includegraphics[width=0.4\textwidth]{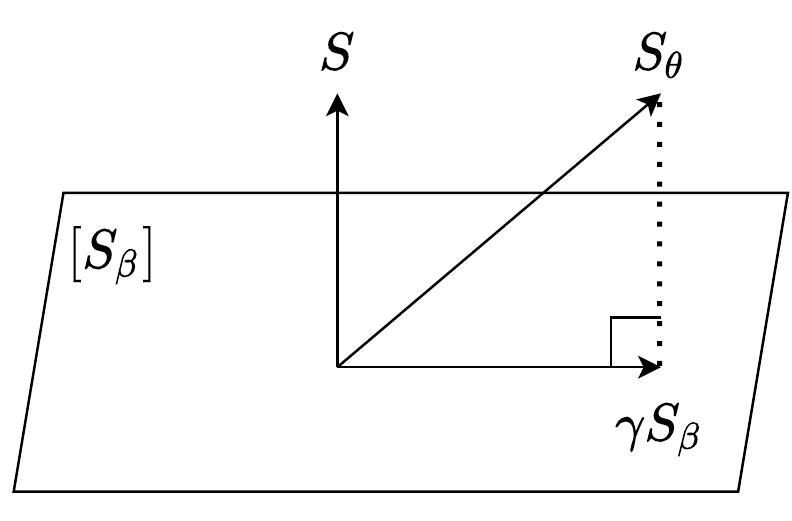}
    \caption{Illustration of the orthogonal score by projection.}
    \label{fig:ortho-score}
\end{figure}

\paragraph{Neyman orthogonality.}
We use \emph{Neyman orthogonality} \citep{neyman1959optimal,neyman1979test} to obtain a fast rate for the excess risk.
The intuition behind it is that we want the risk to be insensitive to perturbations in the nuisance $g$ so that a good estimate $\hat \theta$ can be obtained even if $\hat g$ is of poor quality.
\begin{definition}
    We say the population risk $L$ is Neyman orthogonal at $(\theta_\star, g_0)$ over $\Theta' \times \calG'$ if
    \begin{align}\label{eq:orthogonal}
        \rmD_g \rmD_\theta L(\theta_\star, g_0)[\theta - \theta_\star, g - g_0] = 0, \quad \mbox{for all } \theta \in \Theta', g \in \calG'.
    \end{align}
    Since \eqref{eq:orthogonal} also implies that $\rmD_g S(\theta_\star, g_0)[g - g_0] = 0$ for all $g \in \calG'$,
    we will also say the score $S$ is Neyman orthogonal at $(\theta_\star, g_0)$.
\end{definition}
When $g$ is parametrized by a finite-dimensional vector $\beta$, we can obtain a Neyman orthogonal score by projection.
Let $L(\theta, \beta)$ be some population risk which may not be Neyman orthogonal.
We project $S_\theta := \nabla_\theta L(\theta, \beta)$ onto the space spanned by $S_\beta := \nabla_\beta L(\theta, \beta)$ and obtain $S := S_\theta - \gamma S_\beta$ where $\gamma := [\nabla_{\theta} \nabla_\beta L(\theta, \beta)] [\nabla^2_{\beta} L(\theta, \beta)^{-1}]$.
It can be shown that $S$ is Neyman orthogonal at $(\theta_\star, \beta_0)$.
This procedure is illustrated in \Cref{fig:ortho-score}.
Now, to get a population risk that satisfies Neyman orthogonality, it suffices to take the integral of $S$ w.r.t.~$\theta$.

\subsection{Assumptions}
\label{sub:asmp}
Since our analysis is local to neighborhoods of $\theta_\star$ and $g_0$,
our first assumption localizes the estimator $\hat \theta$ and $\hat g$ to such neighborhoods.

\begin{assumption}[Localization]\label{asmp:local}
    Let $r_1, r_2 > 0$ be constants and $\bar r_1 := r_1 \sqrt{\lambda_{\min}(H_\star)} > 0$.
    There exists a function $N_{\bar r_1, r_2}: [0, 1] \rightarrow \bbN_+$ and such that for any $\delta \in (0, 1)$ we have, with probability at least $1 - \delta$, $\hat \theta \in \Theta_{\bar r_1}(\theta_\star)$ and $\hat g \in \calG_{r_2}(g_0)$ for all $n \ge N_{\bar r_1, r_2}(\delta)$.
\end{assumption}

The localization assumption is necessary to avoid a global strong convexity assumption which is assumed by \cite{foster2020orthogonal}.
In order to control the empirical score, we assume that the normalized score at $\theta_\star$ is sub-Gaussian uniformly over $\calG_{r_2}(g_0)$.
\begin{assumption}[Score sub-Gaussianity]\label{asmp:score_subg}
    There exists a constant $K_1 > 0$ such that,
    for every $g \in \calG_{r_2}(g_0)$,
    we have $\norm{G(\theta_\star, g)^{-1/2} [S(\theta_\star, g; Z) - S(\theta_\star, g)]}_{\psi_2} \le K_1$, where $\norm{\cdot}_{\psi_2}$ is the sub-Gaussian norm defined in \Cref{sec:tools}.
\end{assumption}

Another quantity that we need to control is $S(\theta_\star, \hat g)^\top (\hat \theta - \theta_\star) = \rmD_\theta L(\theta_\star, \hat g)[\hat \theta - \theta_\star]$.
Note that $S(\theta_\star, g_0) = 0$ by the first order optimality condition.
Hence, we may control it with a smoothness assumption on the population risk.
\begin{customasmp}{3a}\label{asmp:smooth}
    For all $\theta \in \Theta_{\bar r_1}(\theta_\star)$ and $g, \bar g \in \calG_{r_2}(g_0)$, it holds that
    \begin{align*}
        \abs{\rmD_g \rmD_\theta L(\theta_\star, \bar g)[\theta - \theta_\star, g - g_0]} \le \beta_1 \norm{\theta - \theta_\star}_{H_\star} \norm{g - g_0}_{\calG}
    \end{align*}
    for some constant $\beta_1 > 0$.
\end{customasmp}
As we will show in \Cref{sec:thms}, this assumption will lead to a slow rate which scales as $O(n^{-1} + \norm{\hat g - g_0}_{\calG}^2)$.
If $S(\theta_\star, g)$ is insensitive to $g$ around $g_0$, we can obtain a faster rate $O(n^{-1} + \norm{\hat g - g_0}_{\calG}^4)$.
This insensitivity can be characterized by the \emph{Neyman orthogonality} and higher order smoothness.
\begin{customasmp}{3b}\label{asmp:orthogonal}
    The population risk $L$ is \emph{Neyman orthogonal} at $(\theta_\star, g_0)$ over $\Theta_{\bar r_1}(\theta_\star) \times \calG_{r_2}(g_0)$.
    Moreover, it holds for some constant $\beta_2 > 0$ that
    \begin{align*}
        \abs{\rmD_g^2 \rmD_\theta L(\theta_\star, \bar g)[\theta - \theta_\star, g - g_0, g - g_0]} \le \beta_2 \norm{\theta - \theta_\star}_{H_\star} \norm{g - g_0}_{\calG}^2
    \end{align*}
    for all $\theta \in \Theta_{\bar r_1}(\theta_\star)$ and $g, \bar g \in \calG_{r_2}(g_0)$.
\end{customasmp}

To facilitate the control of the empirical Hessian, we use the pseudo self-concordance as in Definition \ref{def:modify_self_concord}, which allows us to relate $H_n(\theta, g)$ and $H(\theta, g)$ to $H_n(\theta_\star, g)$ and $H(\theta_\star, g)$, respectively.

\begin{customasmp}{4}[Uniform pseudo self-concordance]\label{asmp:pseudo_self_concordance}
    For any $z \in \calZ$ and $g \in \calG_{r_2}(g_0)$, $\ell(\cdot, g; z)$ is pseudo self-concordant with parameter $R$ on $\Theta_{\bar r_1}(\theta_\star)$.
    Consequently, for any $g \in \calG_{r_2}(g_0)$, $L(\cdot, g)$ is pseudo self-concordant with parameter $R$ on $\Theta_{\bar r_1}(\theta_\star)$.
\end{customasmp}

Since $\hat \theta$ is random, we assume that the Hessian satisfies the Bernstein condition so that we can use a covering number argument to relate $H_n(\hat \theta, g)$ to $H(\theta_\star, g)$.
Due to the variability in $\hat g$, the Bernstein condition is satisfied uniformly over a neighborhood of $g_0$ and we also assume the stability of $H(\theta_\star, g)$ around $g_0$.
\begin{customasmp}{5}\label{asmp:hessian_bernstein}
    For any $\theta \in \Theta_{\bar r_1}(\theta_\star)$ and $g \in \calG_{r_2}(g_0)$, the centered sandwich Hessian
    \begin{align*}
        H(\theta, g)^{-1/2} H(\theta, g; Z) H(\theta, g)^{-1/2} - I_d
    \end{align*}
    satisfies a Bernstein condition with parameter $K_{2}$ and
    \begin{align*}
        \sigma_H^2 := \sup_{\theta \in \Theta_{\bar r_1}(\theta_\star), g \in \calG_{r_2}(g_0)} \anorm{\Var\left( H(\theta, g)^{-1/2} H(\theta, g; Z) H(\theta, g)^{-1/2} \right)}_2 < \infty,
    \end{align*}
    where, for a matrix $J \in \reals^{d \times d}$, we define $\anorm{J}_2 := \max\{\lambda_{\max}(J), \abs{\lambda_{\min}(J)}\}$ and $\Var(J) := \Expect[JJ^\top] - \Expect[J] \Expect[J]^\top$.
    Moreover, there exist constants $\kappa$ and $\mathcal{K}$ depending on $r_2$ such that
    \begin{align}\label{eq:hessian_stable}
        \kappa H_\star \preceq H(\theta_\star, g) \preceq \mathcal{K} H_\star, \quad \mbox{for all } g \in \calG_{r_2}(g_0).
    \end{align}
\end{customasmp}

\subsection{Main Results}

We now present our main results.
We will discuss our results in more detail in \Cref{sec:discussion}.
The first result is a fast rate of convergence for the excess risk assuming Neyman orthogonality.

\begin{theorem}[Fast rate]\label{thm:risk_bound}
    Under Assumptions \ref{asmp:local}, \ref{asmp:score_subg}, \ref{asmp:orthogonal}, \ref{asmp:pseudo_self_concordance}, and \ref{asmp:hessian_bernstein}, the OSL estimator $\hat \theta$ has excess risk, with probability at least $1 - \delta$,
    \begin{align}\label{eq:risk_bound}
        \erisk(\hat \theta, g_0) \lesssim \frac{e^{3Rr_1}}{\kappa^2} \left[ \frac{K_1^2 \log{(1/\delta)} \bar d_\star}{n} + \beta_2^2 \norm{\hat g - g_0}_{\calG}^4 \right]
    \end{align}
    whenever $n \ge \max\{N_{\bar r_1, r_2}(\delta/5), 16(K_{2}^2 + 2\sigma_H^2) [\log{(20d/\delta)} + d\log{(3Rr_1/\log{2})}]^2\}$,
    where $\lesssim$ hides an absolute constant.
\end{theorem}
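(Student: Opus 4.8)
The plan is to bound the excess risk by the weighted distance $\norm{\hat\theta-\theta_\star}_{H_\star}^2$, bound that distance by a normalized empirical score at $(\theta_\star,\hat g)$, and then split the score into a stochastic fluctuation (which produces the effective-dimension term) and a bias (which produces the fourth-power nuisance term). Throughout I would work on the high-probability event of Assumption~\ref{asmp:local}, so that $\hat\theta\in\Theta_{\bar r_1}(\theta_\star)$ and $\hat g\in\calG_{r_2}(g_0)$; note that $\Theta_{\bar r_1}(\theta_\star)$ is contained in the Euclidean ball of radius $r_1$ around $\theta_\star$, so every use of pseudo self-concordance on it costs only a factor $e^{Rr_1}$.

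\emph{From excess risk to the empirical score.} Since $S_\star=\nabla_\theta L(\theta_\star,g_0)=0$, a second-order Taylor expansion with integral remainder gives $\erisk(\hat\theta,g_0)=\int_0^1 (1-t)\,(\hat\theta-\theta_\star)^\top H(\theta_\star+t(\hat\theta-\theta_\star),g_0)(\hat\theta-\theta_\star)\,\D t$, and bounding each Hessian on the segment by $e^{Rr_1}H_\star$ via Assumption~\ref{asmp:pseudo_self_concordance} yields $\erisk(\hat\theta,g_0)\le\tfrac12 e^{Rr_1}\norm{\hat\theta-\theta_\star}_{H_\star}^2$. Next, by Assumption~\ref{asmp:local} the point $\hat\theta$ is an interior minimizer of the convex map $\theta\mapsto L_n(\theta,\hat g)$, so $S_n(\hat\theta,\hat g)=0$; the fundamental theorem of calculus then gives $-S_n(\theta_\star,\hat g)=\bar H_n(\hat\theta-\theta_\star)$ with $\bar H_n:=\int_0^1 H_n(\theta_\star+t(\hat\theta-\theta_\star),\hat g)\,\D t$, hence $\norm{\hat\theta-\theta_\star}_{H_\star}\le\norm{H_\star^{1/2}\bar H_n^{-1}H_\star^{1/2}}_{\mathrm{op}}\,\norm{H_\star^{-1/2}S_n(\theta_\star,\hat g)}_2$. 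It remains to lower-bound $\bar H_n$ and to bound the normalized score.

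\emph{Lower bound on $\bar H_n$ — the main obstacle.} I expect this uniform Hessian control to be the crux. Conditioning on $\sample_2$ fixes $\hat g\in\calG_{r_2}(g_0)$, after which $\sample_1$ is i.i.d.\ and fresh. I would take a minimal $\epsilon$-net $\calN$ of $\Theta_{\bar r_1}(\theta_\star)$ with $\epsilon$ of order $\sqrt{\lambda_{\min}(H_\star)}\,\log 2/R$, so that $\abs{\calN}\le (3Rr_1/\log 2)^d$ and pseudo self-concordance (Assumption~\ref{asmp:pseudo_self_concordance}) gives, for every $\theta$ in the ellipsoid and its nearest net point $\theta^{(j)}$, both $H_n(\theta,\hat g)\succeq\tfrac12 H_n(\theta^{(j)},\hat g)$ and (at the population level, using the full radius) $H(\theta^{(j)},\hat g)\succeq e^{-Rr_1}H(\theta_\star,\hat g)$. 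For each fixed $\theta^{(j)}$ the Bernstein condition and variance bound $\sigma_H^2$ of Assumption~\ref{asmp:hessian_bernstein} let me apply a matrix Bernstein inequality to $H(\theta^{(j)},\hat g)^{-1/2}H_n(\theta^{(j)},\hat g)H(\theta^{(j)},\hat g)^{-1/2}-I_d$, and a union bound over $\calN$ together with the assumed lower bound on $n$ makes $H_n(\theta^{(j)},\hat g)\succeq\tfrac12 H(\theta^{(j)},\hat g)$ hold simultaneously with probability $1-\delta/5$; the threshold $n\ge 16(K_2^2+2\sigma_H^2)[\log(20d/\delta)+d\log(3Rr_1/\log 2)]^2$ is precisely what forces the Bernstein deviation at resolution $\log(\abs{\calN}\,d/\delta)$ to be at most $\tfrac12$. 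Finally, the stability bound \eqref{eq:hessian_stable} gives $H(\theta_\star,\hat g)\succeq\kappa H_\star$. Chaining these inequalities over the integral defining $\bar H_n$ gives $\bar H_n\succeq c\,\kappa\,e^{-Rr_1}H_\star$ for an absolute constant $c$, hence $\norm{H_\star^{1/2}\bar H_n^{-1}H_\star^{1/2}}_{\mathrm{op}}\lesssim \kappa^{-1}e^{Rr_1}$.

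\emph{Bounding the normalized score.} Decompose $S_n(\theta_\star,\hat g)=[S_n(\theta_\star,\hat g)-S(\theta_\star,\hat g)]+S(\theta_\star,\hat g)$. Conditionally on $\sample_2$, the first term is the average of $n$ i.i.d.\ centered vectors with covariance $\tfrac1n G(\theta_\star,\hat g)$ that are sub-Gaussian after normalizing by $G(\theta_\star,\hat g)^{-1/2}$ with constant $K_1$ (Assumption~\ref{asmp:score_subg}, valid since $\hat g\in\calG_{r_2}(g_0)$); a Hanson--Wright-type bound for quadratic forms of sub-Gaussian vectors then gives, with probability $1-\delta/5$, $\norm{H_\star^{-1/2}(S_n(\theta_\star,\hat g)-S(\theta_\star,\hat g))}_2^2\lesssim \tfrac{K_1^2\log(1/\delta)}{n}\Tr(H_\star^{-1/2}G(\theta_\star,\hat g)H_\star^{-1/2})\le \tfrac{K_1^2\log(1/\delta)\,\bar d_\star}{n}$, where the last step is the definition of $\bar d_\star$. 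For the bias term, $S(\theta_\star,g_0)=0$ and a second-order Taylor expansion in $g$ about $g_0$ has vanishing first-order term by Neyman orthogonality (the consequence $\rmD_g S(\theta_\star,g_0)[\,\cdot\,]=0$ recorded in the definition of Neyman orthogonality, invoked through Assumption~\ref{asmp:orthogonal}), while the second-order remainder is controlled by the higher-order smoothness bound in Assumption~\ref{asmp:orthogonal}: testing against $v$ with $\norm{v}_{H_\star}\le 1$ and using $\sup_{\norm{v}_{H_\star}\le1} v^\top w=\norm{H_\star^{-1/2}w}_2$ gives $\norm{H_\star^{-1/2}S(\theta_\star,\hat g)}_2\le\tfrac12\beta_2\norm{\hat g-g_0}_\calG^2$. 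Combining the three bounds — and allocating $\delta$ across the localization, Hessian, and score events — produces \eqref{eq:risk_bound}, with the prefactor $e^{3Rr_1}/\kappa^2$ accounted for by one factor $e^{Rr_1}$ from the Taylor step and two more from squaring $\norm{H_\star^{1/2}\bar H_n^{-1}H_\star^{1/2}}_{\mathrm{op}}$.
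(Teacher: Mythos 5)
Your proposal follows the paper's architecture almost exactly: the same Taylor expansion of the excess risk with $S(\theta_\star,g_0)=0$ and a $e^{Rr_1}$ self-concordance factor, the same $\epsilon$-net plus matrix-Bernstein control of the empirical Hessian with the same net cardinality $(3Rr_1/\log 2)^d$ and sample-size threshold, the same sub-Gaussian quadratic-form bound yielding $\bar d_\star$, and the same orthogonality-plus-second-order-smoothness treatment of the bias term (your duality formulation $\norm{H_\star^{-1/2}S(\theta_\star,\hat g)}_2\le\tfrac12\beta_2\norm{\hat g-g_0}_\calG^2$ is equivalent to the paper's Lemma~\ref{lem:score_orthogonal} by linearity in the $\theta$-direction). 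Your conditioning on $\sample_2$ plays the role of the paper's Lemma~\ref{lem:independence}.

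The one genuine deviation is also where there is a gap. You pass from the score to $\hat\theta-\theta_\star$ via the stationarity identity $S_n(\hat\theta,\hat g)=0$ and the Newton-type relation $-S_n(\theta_\star,\hat g)=\bar H_n(\hat\theta-\theta_\star)$, justifying stationarity by saying that Assumption~\ref{asmp:local} makes $\hat\theta$ an \emph{interior} minimizer. It does not: the assumption only places $\hat\theta$ in $\Theta_{\bar r_1}(\theta_\star)=\{\theta\in\Theta:\norm{\theta-\theta_\star}_{H_\star}<\bar r_1\}$, and $\Theta$ may be a constrained set whose boundary intersects the Dikin ellipsoid, in which case the empirical score at $\hat\theta$ need not vanish and your integral identity fails. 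The paper sidesteps this entirely by using only the basic inequality $L_n(\hat\theta,\hat g)-L_n(\theta_\star,\hat g)\le 0$ together with a Lagrange-form Taylor expansion, which requires no first-order condition at $\hat\theta$. Your argument is repaired by replacing stationarity with the variational inequality $S_n(\hat\theta,\hat g)^\top(\theta_\star-\hat\theta)\ge 0$ (valid since $\theta_\star\in\Theta$ and $L_n(\cdot,\hat g)$ is convex), which gives $(\hat\theta-\theta_\star)^\top\bar H_n(\hat\theta-\theta_\star)\le -S_n(\theta_\star,\hat g)^\top(\hat\theta-\theta_\star)$; combined with your lower bound $\bar H_n\succeq c\,\kappa\,e^{-Rr_1}H_\star$ and Cauchy--Schwarz this recovers exactly the paper's inequality for $\norm{\hat\theta-\theta_\star}_{H_\star}$, and the rest of your proof then goes through unchanged.
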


When Neyman orthogonality fails to hold, we have a similar bound with $\norm{\hat g - g_0}_{\calG}^4$ being replaced by $\norm{\hat g - g_0}_{\calG}^2$.
\begin{theorem}[Slow rate]\label{thm:risk_bound_slow}
    Under Assumptions \ref{asmp:local}, \ref{asmp:score_subg}, \ref{asmp:smooth}, \ref{asmp:pseudo_self_concordance}, and \ref{asmp:hessian_bernstein}, the OSL estimator $\hat \theta$ has excess risk, with probability at least $1 - \delta$,
    \begin{align}\label{eq:risk_bound_slow}
        \erisk(\hat \theta, g_0) \lesssim \frac{e^{3Rr_1}}{\kappa^2} \left[ \frac{K_1^2 \log{(1/\delta)} \bar d_\star}{n} + \beta_1^2 \norm{\hat g - g_0}_{\calG}^2 \right]
    \end{align}
    whenever $n \ge \max\{N_{\bar r_1, r_2}(\delta/5), 16(K_{2}^2 + 2\sigma_H^2) [\log{(20d/\delta)} + d\log{(3Rr_1/\log{2})}]^2\}$,
    where $\lesssim$ hides an absolute constant.
\end{theorem}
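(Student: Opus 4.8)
\emph{Proof plan.} The plan mirrors that of the fast-rate \Cref{thm:risk_bound}, the sole substantive difference being how the deterministic part of the empirical score at $\theta_\star$ is controlled. I would work on the high-probability event of \Cref{asmp:local}, so $\hat\theta\in\Theta_{\bar r_1}(\theta_\star)$ and $\hat g\in\calG_{r_2}(g_0)$, and condition on the second split $\sample_2$ so that $\hat g$ is fixed and $\sample_1$ is i.i.d. The first step is a self-concordance reduction of the excess risk to $\norm{\hat\theta-\theta_\star}_{H_\star}^2$: since $\ell(\cdot,g_0;z)$ is pseudo self-concordant with parameter $R$ on $\Theta_{\bar r_1}(\theta_\star)$ (\Cref{asmp:pseudo_self_concordance}) and every $\theta$ in that ellipsoid satisfies $\norm{\theta-\theta_\star}_2<r_1$, the standard pseudo self-concordance calculus gives $H(\theta,g_0)\preceq e^{Rr_1}H_\star$ along the segment from $\theta_\star$ to $\hat\theta$; combined with $S(\theta_\star,g_0)=\nabla_\theta L(\theta_\star,g_0)=0$ and Taylor's theorem this yields $\erisk(\hat\theta,g_0)\le\tfrac12 e^{Rr_1}\norm{\hat\theta-\theta_\star}_{H_\star}^2$. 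It then suffices to bound $\norm{\hat\theta-\theta_\star}_{H_\star}$.

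For that I would combine a uniform lower bound on the empirical Hessian with a bound on the empirical score at $\theta_\star$. For the Hessian, a matrix-Bernstein inequality for the centered sandwich Hessian (\Cref{asmp:hessian_bernstein}, parameters $K_2,\sigma_H^2$) applied over an $\epsilon$-net of $\Theta_{\bar r_1}(\theta_\star)$ of cardinality $(3Rr_1/\log 2)^d$ — the scale $\epsilon\asymp\log 2/R$ being calibrated so that pseudo self-concordance makes the off-net error at most a constant factor — shows, once $n\ge 16(K_2^2+2\sigma_H^2)[\log(20d/\delta)+d\log(3Rr_1/\log 2)]^2$, that $H_n(\theta,\hat g)\succeq\tfrac12 H(\theta,\hat g)$ for all $\theta\in\Theta_{\bar r_1}(\theta_\star)$ with probability at least $1-\delta/5$. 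Pseudo self-concordance gives $H(\hat\theta,\hat g)\succeq e^{-Rr_1}H(\theta_\star,\hat g)$ and the stability bound \eqref{eq:hessian_stable} gives $H(\theta_\star,\hat g)\succeq\kappa H_\star$, so $L_n(\cdot,\hat g)$ is $\tfrac12 e^{-Rr_1}\kappa$-strongly convex relative to $\norm{\cdot}_{H_\star}$ on the convex set $\Theta_{\bar r_1}(\theta_\star)$. Using $S_n(\hat\theta,\hat g)=0$, strong convexity then yields $\norm{\hat\theta-\theta_\star}_{H_\star}\le 2e^{Rr_1}\kappa^{-1}\norm{H_\star^{-1/2}S_n(\theta_\star,\hat g)}_2$.

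It remains to bound $\norm{H_\star^{-1/2}S_n(\theta_\star,\hat g)}_2$, which I split into the centered part $S_n(\theta_\star,\hat g)-S(\theta_\star,\hat g)$ and the bias $S(\theta_\star,\hat g)$. Writing $S(\theta_\star,\hat g;Z_i)-S(\theta_\star,\hat g)=G(\theta_\star,\hat g)^{1/2}V_i$ with the $V_i$ i.i.d.\ mean-zero and $\norm{V_i}_{\psi_2}\le K_1$ (\Cref{asmp:score_subg}), the quantity $\norm{H_\star^{-1/2}(S_n(\theta_\star,\hat g)-S(\theta_\star,\hat g))}_2^2=\bar V_n^\top(H_\star^{-1/2}G(\theta_\star,\hat g)H_\star^{-1/2})\bar V_n$, with $\bar V_n:=n^{-1}\sum_i V_i$, is a sub-exponential quadratic form whose Hanson--Wright-type deviation is governed by $\Tr(H_\star^{-1/2}G(\theta_\star,\hat g)H_\star^{-1/2})\le\bar d_\star$, giving $\norm{H_\star^{-1/2}(S_n(\theta_\star,\hat g)-S(\theta_\star,\hat g))}_2\lesssim K_1\sqrt{\bar d_\star\log(1/\delta)/n}$ with probability at least $1-\delta/5$. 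For the bias, $S(\theta_\star,g_0)=0$ and the fundamental theorem of calculus give, for any direction $u$, $u^\top S(\theta_\star,\hat g)=\int_0^1\rmD_g\rmD_\theta L(\theta_\star,g_0+t(\hat g-g_0))[u,\hat g-g_0]\,\D t$; \Cref{asmp:smooth} (used via homogeneity in $u$, noting $g_0+t(\hat g-g_0)\in\calG_{r_2}(g_0)$) bounds the integrand by $\beta_1\norm{u}_{H_\star}\norm{\hat g-g_0}_\calG$, so a supremum over $\norm{u}_{H_\star}\le 1$ yields $\norm{H_\star^{-1/2}S(\theta_\star,\hat g)}_2\le\beta_1\norm{\hat g-g_0}_\calG$. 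This is precisely where the argument diverges from the fast rate: there \Cref{asmp:orthogonal} annihilates the first-order term in $t$, leaving a second-order remainder of size $\beta_2\norm{\hat g-g_0}_\calG^2$ instead.

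Finally I would assemble the pieces: squaring $\norm{\hat\theta-\theta_\star}_{H_\star}\lesssim e^{Rr_1}\kappa^{-1}(K_1\sqrt{\bar d_\star\log(1/\delta)/n}+\beta_1\norm{\hat g-g_0}_\calG)$, applying $(a+b)^2\le 2a^2+2b^2$, multiplying by the $e^{Rr_1}$ from the first step, and taking a union bound over the (at most five) events above — each at level $\delta/5$, with the localization event requiring $n\ge N_{\bar r_1,r_2}(\delta/5)$ — produces \eqref{eq:risk_bound_slow} on the stated sample-size range. I expect the main obstacle to be the uniform Hessian lower bound: making the self-concordance factors, the net cardinality, and the Bernstein tail combine so that both the exponent $e^{3Rr_1}$ and the threshold term $d\log(3Rr_1/\log 2)$ emerge exactly as claimed requires careful bookkeeping, whereas the score concentration and the bias control are comparatively routine.
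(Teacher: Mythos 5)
Your proposal is correct and follows essentially the same route as the paper: the paper proves the slow rate by rerunning the fast-rate argument with Lemma~\ref{lem:score_orthogonal} replaced by Lemma~\ref{lem:score}, which is exactly your first-order Taylor (integral-form) bound $\norm{H_\star^{-1/2} S(\theta_\star, \hat g)}_2 \le \beta_1 \norm{\hat g - g_0}_{\calG}$ from Assumption~\ref{asmp:smooth}, and all other ingredients --- localization, the $\epsilon$-net matrix-Bernstein Hessian bound, and the sub-Gaussian quadratic-form score bound --- match the paper's Propositions~\ref{prop:score} and~\ref{prop:hessian}. The only cosmetic deviation is that you invoke the stationarity condition $S_n(\hat\theta, \hat g) = 0$ plus strong convexity where the paper uses the weaker comparison $L_n(\hat\theta, \hat g) \le L_n(\theta_\star, \hat g)$ plus Taylor's theorem, which is preferable if $\hat\theta$ could lie on the boundary of $\Theta$.
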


The detailed proofs of Theorems \ref{thm:risk_bound} and \ref{thm:risk_bound_slow} are deferred to \Cref{sec:proof}.
On a high level, the proofs proceed as follows.
To begin with, due to \Cref{asmp:local}, we can dedicate our analysis to the case when $\hat \theta \in \Theta_{\bar r_1}(\theta_\star)$.
By Taylor's theorem,
\begin{align*}
    \erisk(\hat \theta, g_0) := L(\hat \theta, g_0) - L(\theta_\star, g_0) = S(\theta_\star, g_0)^\top (\hat \theta - \theta_\star) + \frac12 \norm{\hat \theta - \theta_\star}_{H(\bar \theta, g_0)}^2
\end{align*}
for some $\bar \theta \in \mbox{Conv}\{\hat \theta, \theta_\star\}$.
By the first order orthogonality condition, it holds that $S(\theta_\star, g_0) = 0$.
For the second term, it follows from the property of the pseudo self-concordance (\Cref{asmp:pseudo_self_concordance}) that
\begin{align*}
    \norm{\hat \theta - \theta_\star}_{H(\bar \theta, g_0)}^2 \le e^{R \norm{\bar \theta - \theta_\star}_2} \norm{\hat \theta - \theta_\star}_{H_\star}^2 \le e^{R r_1} \norm{\hat \theta - \theta_\star}_{H_\star}^2.
\end{align*}
It now remains to control $\norm{\hat \theta - \theta_\star}_{H_\star}^2$.

By Taylor's theorem again, it holds that
\begin{align}\label{eq:empirical_risk_taylor}
    L_n(\hat \theta, \hat g) - L_n(\theta_\star, \hat g) = S_n(\theta_\star, \hat g)^\top (\hat \theta - \theta_\star) + \frac12 \norm{\hat \theta - \theta_\star}^2_{H_n(\bar \theta', \hat g)},
\end{align}
where $\bar \theta' \in \mbox{Conv}\{\hat \theta, \theta_\star\}$.
By the optimality of $\hat \theta$, we have $L_n(\hat \theta, \hat g) - L_n(\theta_\star, \hat g) \le 0$.
We then lower bound the right hand side of \eqref{eq:empirical_risk_taylor}.
According to \Cref{asmp:local}, we have $\hat g \in \calG_{r_2}(g_0)$ with high probability when $n$ is sufficiently large.
The following Lemma \ref{lem:independence}, which is a direct consequence of the independence between $\{Z_i\}_{i=1}^n$ and $\hat g$, allows us to work with fixed $g \in \calG_{r_2}(g_0)$ instead of the random estimator $\hat g$.
\begin{lemma}\label{lem:independence}
    Let $\mathcal{A}(\hat g, \{Z_i\}_{i=1}^n)$ be some event regarding $\hat g$ and $\{Z_i\}_{i=1}^n$.
    Let $\calG' \subset \calG$.
    If there exists $\delta \in (0, 1)$ such that $\Prob(\calA(g, \{Z_i\}_{i=1}^n)) \ge 1 - \delta$ for all fixed $g \in \calG'$, then $\Prob(\calA(\hat g, \{Z_i\}_{i=1}^n)) \ge (1 - \delta) \Prob(\hat g \in \calG')$.
\end{lemma}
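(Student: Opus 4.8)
The plan is to exploit the sample-splitting structure of the OSL Meta-Algorithm: the estimator $\hat g$ is produced by the first-stage algorithm run on $\sample_2 = \{Z_i\}_{i=n+1}^{2n}$, so $\hat g$ is a measurable function of $\sample_2$, and since $\sample := \{Z_1,\dots,Z_{2n}\}$ is i.i.d.\ we have $\hat g \indep \sample_1$, where $\sample_1 = \{Z_i\}_{i=1}^n$. The idea is then the standard ``freezing'' (conditioning) argument: conditionally on $\sample_2$, the sample $\sample_1$ still has its original joint law, so the conditional probability of $\calA(\hat g, \sample_1)$ given $\sample_2$ equals the deterministic function $\psi(g) := \Prob(\calA(g, \sample_1))$ evaluated at the realized value $g = \hat g$.

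Concretely, first I would record the identity $\Prob\big(\calA(\hat g, \sample_1) \,\big|\, \sample_2\big) = \psi(\hat g)$, which holds almost surely by independence of $\hat g$ and $\sample_1$ (a disintegration / Fubini argument; see the remark below on measurability). Then, restricting to the event $\{\hat g \in \calG'\}$, the hypothesis of the lemma gives the pointwise bound $\psi(\hat g) \ge 1 - \delta$ on that event. Combining monotonicity of probability with the tower property yields
\begin{align*}
    \Prob\big(\calA(\hat g, \sample_1)\big)
    &\ge \Prob\big(\calA(\hat g, \sample_1) \cap \{\hat g \in \calG'\}\big) \\
    &= \Expect\Big[ \ind\{\hat g \in \calG'\}\, \Prob\big(\calA(\hat g, \sample_1) \,\big|\, \sample_2\big) \Big] \\
    &= \Expect\big[ \ind\{\hat g \in \calG'\}\, \psi(\hat g) \big] \\
    &\ge (1 - \delta)\, \Expect\big[ \ind\{\hat g \in \calG'\} \big]
    = (1 - \delta)\, \Prob(\hat g \in \calG'),
\end{align*}
which is exactly the claimed bound.

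The only point requiring care — and the one I would state explicitly — is the joint measurability of the event $\calA$ as a function of its two arguments $(\hat g, \sample_1)$: this is what licenses both the freezing identity $\Prob(\calA(\hat g,\sample_1)\mid\sample_2) = \psi(\hat g)$ and the measurability of $g \mapsto \psi(g)$ needed to take the expectation above. In all the uses of this lemma in the sequel, $\calA$ is defined through finitely many Borel (indeed continuous) functionals of $(\hat g, Z_1, \dots, Z_n)$, so joint measurability is immediate and no additional hypothesis is needed. Everything else is a one-line consequence of independence and the monotonicity of $\Prob$, so I do not anticipate any real obstacle beyond making this measurability remark precise.
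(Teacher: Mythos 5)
Your proposal is correct and follows essentially the same route as the paper: the paper likewise freezes $\hat g$ via independence (conditioning on $\hat g$ rather than on $\sample_2$, an immaterial difference since $\hat g$ is $\sigma(\sample_2)$-measurable) and then applies the tower property together with the restriction to $\{\hat g \in \calG'\}$ to get $(1-\delta)\Prob(\hat g \in \calG')$. Your added remark on joint measurability is a reasonable point of rigor that the paper leaves implicit, but it does not change the argument.
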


Now we focus on the score term $S_n(\theta_\star, g)^\top (\hat \theta - \theta_\star)$ in \eqref{eq:empirical_risk_taylor} with $\hat g$ replaced by a fixed $g \in \calG_{r_2}(g_0)$.
We split it into two terms
\begin{align}\label{eq:two_terms}
    [S_n(\theta_\star, g) - S(\theta_\star, g)]^\top (\hat \theta - \theta_\star) + S(\theta_\star, g)^\top (\hat \theta - \theta_\star).
\end{align}
The first term in \eqref{eq:two_terms} can be controlled using the sub-Gaussianity of the score.
Recall that
\begin{align*}
    \bar d_\star := \sup_{g \in \calG_{r_2}(g_0)} \Tr(H_\star^{-1/2} G(\theta_\star, g) H_\star^{-1/2}).
\end{align*}
\begin{proposition}\label{prop:score}
    Under \Cref{asmp:score_subg}, it holds for any fixed $g \in \calG_{r_2}(g_0)$ that, with probability at least $1 - \delta$,
    \begin{align*}
        \norm{S_n(\theta_\star, g) - S(\theta_\star, g)}_{H_\star^{-1}}^2 \lesssim \frac{K_1^2 \log{(1/\delta)} \bar d_\star}{n}.
    \end{align*}
\end{proposition}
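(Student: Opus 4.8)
The plan is to recognize $S_n(\theta_\star,g)-S(\theta_\star,g)=\frac1n\sum_{i=1}^nW_i$ as the empirical mean of the i.i.d.\ mean-zero vectors $W_i:=S(\theta_\star,g;Z_i)-S(\theta_\star,g)$, whose common covariance is precisely $G:=G(\theta_\star,g)$, and then to whiten by this covariance. Setting $U_i:=G^{-1/2}W_i$ (using the Moore--Penrose pseudo-inverse and restricting to $\mathrm{range}(G)$ if $G$ is singular, which is harmless since $W_i\in\mathrm{range}(G)$ a.s.), the $U_i$ become i.i.d., isotropic, and, by \Cref{asmp:score_subg}, sub-Gaussian with $\norm{U_i}_{\psi_2}\le K_1$; moreover $\Expect\langle U_i,v\rangle^2=1$ for all unit $v$ forces $K_1\ge c_0$ for an absolute constant $c_0$, which I will use to simplify the final bound.

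Next I would pass to a quadratic form. With $M:=G^{1/2}H_\star^{-1}G^{1/2}\succeq0$, a direct computation gives
\[
\norm{S_n(\theta_\star,g)-S(\theta_\star,g)}_{H_\star^{-1}}^2=\frac1{n^2}\Bigl(\sum_{i=1}^nU_i\Bigr)^{\top}M\Bigl(\sum_{i=1}^nU_i\Bigr)=\frac1n\,\xi^\top M\xi,\qquad\xi:=\frac1{\sqrt n}\sum_{i=1}^nU_i.
\]
The structural point is that $M$ has the same nonzero eigenvalues as $H_\star^{-1/2}GH_\star^{-1/2}$, so $\Tr(M)=\Tr(H_\star^{-1/2}G(\theta_\star,g)H_\star^{-1/2})\le\bar d_\star$ by the definition of the profile effective dimension (here we use $g\in\calG_{r_2}(g_0)$), and consequently $\norm{M}_{\mathrm{op}}\le\bar d_\star$ and $\norm{M}_F\le\Tr(M)\le\bar d_\star$ as well. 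Since $\xi$ is a normalized sum of $n$ independent isotropic $K_1$-sub-Gaussian vectors, it is mean-zero with covariance $I_d$ and $\norm{\xi}_{\psi_2}\lesssim K_1$ by subadditivity of $\norm{\cdot}_{\psi_2}^2$ over independent summands. Applying the Hanson--Wright inequality --- or equivalently the sub-Gaussian quadratic-form concentration recorded in \Cref{sec:tools} --- to $\xi^\top M\xi$ then gives, with probability at least $1-\delta$,
\[
\xi^\top M\xi\le\Tr(M)+CK_1^2\bigl(\norm{M}_F\sqrt{\log(2/\delta)}+\norm{M}_{\mathrm{op}}\log(2/\delta)\bigr)\le\bar d_\star+CK_1^2\bar d_\star\bigl(\sqrt{\log(2/\delta)}+\log(2/\delta)\bigr)
\]
for an absolute constant $C$; using $K_1\ge c_0$ and $\log(2/\delta)\gtrsim1$, the right-hand side is $\lesssim K_1^2\log(1/\delta)\bar d_\star$, and dividing by $n$ closes the argument.

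The only genuinely delicate point is conceptual rather than computational: \Cref{asmp:score_subg} supplies sub-Gaussianity only after whitening by $G(\theta_\star,g)$, while the deviation is to be measured in the $H_\star^{-1}$ norm, and these two matrices need not be comparable. The change of variables $U_i=G^{-1/2}W_i$ is exactly what converts the relevant variance proxy into $\Tr(H_\star^{-1/2}G(\theta_\star,g)H_\star^{-1/2})$, so that the \emph{effective} dimension $\bar d_\star$, rather than the ambient $d$, governs the bound; it is crucial that Hanson--Wright yields $\Tr(M)$ (with $\norm{M}_F$ and $\norm{M}_{\mathrm{op}}$ as lower-order terms) instead of $d\norm{M}_{\mathrm{op}}$, and that we bound $\norm{M}_F$ and $\norm{M}_{\mathrm{op}}$ by $\Tr(M)$ rather than by $\sqrt d\,\norm{M}_{\mathrm{op}}$. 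Everything else is routine sub-Gaussian / sub-exponential bookkeeping.
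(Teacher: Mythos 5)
Your proof is correct and follows essentially the same route as the paper: your $\xi$ and $M$ are exactly the paper's $W := \sqrt{n}\,G(\theta_\star,g)^{-1/2}\bar S_n(\theta_\star,g)$ and $J := G(\theta_\star,g)^{1/2}H_\star^{-1}G(\theta_\star,g)^{1/2}$, the whitening step plus Lemma~\ref{lem:sum_subg} gives the same isotropic sub-Gaussian vector, and the concentration step is the same quadratic-form tail bound (Theorem~\ref{thm:isotropic_tail}) with $\Tr(J)\le\bar d_\star$ dominating the operator and Frobenius norms. Your added remarks on the pseudo-inverse when $G$ is singular and on $K_1\gtrsim 1$ make explicit two points the paper leaves implicit, but the argument is the same.
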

We handle the second term in \eqref{eq:two_terms} by Neyman orthogonality and smoothness assumptions.
\begin{lemma}\label{lem:score_orthogonal}
    Under \Cref{asmp:orthogonal}, it holds that
    \begin{align*}
        S(\theta_\star, g)^\top (\theta - \theta_\star) \ge -\frac{\beta_2}{2} \norm{\theta - \theta_\star}_{H_\star} \norm{g - g_0}_{\calG}^2, \quad \mbox{for all } \theta \in \Theta_{\bar r_1}(\theta_0) \mbox{ and } g \in \calG_{r_2}(g_0).
    \end{align*}
\end{lemma}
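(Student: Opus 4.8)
The plan is to expand $S(\theta_\star, g)$ in the nuisance direction $g - g_0$ around $g_0$ using a second-order Taylor expansion, exploiting that the zeroth- and first-order terms vanish. First I would write, for $\theta \in \Theta_{\bar r_1}(\theta_\star)$ and $g \in \calG_{r_2}(g_0)$,
\begin{align*}
    S(\theta_\star, g)^\top (\theta - \theta_\star) = \rmD_\theta L(\theta_\star, g)[\theta - \theta_\star],
\end{align*}
and view the right-hand side as a function of $g$ along the segment $g_t := g_0 + t(g - g_0)$, $t \in [0,1]$. The value at $t = 0$ is $\rmD_\theta L(\theta_\star, g_0)[\theta - \theta_\star] = S_\star^\top(\theta - \theta_\star) = 0$ by the first-order optimality condition. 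The first derivative in $t$ at $t=0$ is $\rmD_g \rmD_\theta L(\theta_\star, g_0)[\theta - \theta_\star, g - g_0]$, which vanishes by the Neyman orthogonality part of \Cref{asmp:orthogonal}. So by Taylor's theorem with integral (or Lagrange) remainder, there is $\bar g \in \mathrm{Conv}\{g_0, g\} \subset \calG_{r_2}(g_0)$ (the ball being convex) with
\begin{align*}
    S(\theta_\star, g)^\top (\theta - \theta_\star) = \tfrac12 \, \rmD_g^2 \rmD_\theta L(\theta_\star, \bar g)[\theta - \theta_\star, g - g_0, g - g_0].
\end{align*}

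Next I would apply the second bound in \Cref{asmp:orthogonal} to the remainder term. That assumption gives exactly
\begin{align*}
    \abs{\rmD_g^2 \rmD_\theta L(\theta_\star, \bar g)[\theta - \theta_\star, g - g_0, g - g_0]} \le \beta_2 \norm{\theta - \theta_\star}_{H_\star} \norm{g - g_0}_{\calG}^2,
\end{align*}
so combining with the previous display and using $-|a| \le a$,
\begin{align*}
    S(\theta_\star, g)^\top (\theta - \theta_\star) \ge -\tfrac{\beta_2}{2} \norm{\theta - \theta_\star}_{H_\star} \norm{g - g_0}_{\calG}^2,
\end{align*}
which is the claim (noting $\Theta_{\bar r_1}(\theta_0)$ in the statement is $\Theta_{\bar r_1}(\theta_\star)$).

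The only delicate point is justifying the Taylor expansion in the infinite-dimensional nuisance direction: I need the map $t \mapsto \rmD_\theta L(\theta_\star, g_0 + t(g - g_0))[\theta - \theta_\star]$ to be twice continuously differentiable on $[0,1]$, with derivatives given by the Gateaux derivatives $\rmD_g \rmD_\theta L$ and $\rmD_g^2 \rmD_\theta L$ evaluated at $g_t$ along direction $g - g_0$. This is covered by the paper's blanket regularity hypothesis that $\ell$ is twice differentiable w.r.t.\ $g$ together with standard differentiation-under-the-expectation conditions, and by the fact that $g_t$ stays inside $\calG_{r_2}(g_0)$ where the relevant bounds hold; so this is a matter of invoking the stated smoothness rather than a real obstacle. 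Everything else is a two-term Taylor expansion with the first two terms killed by optimality and orthogonality.
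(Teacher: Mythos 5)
Your proof is correct and follows essentially the same route as the paper's: a second-order Taylor expansion of $\rmD_\theta L(\theta_\star, \cdot)[\theta - \theta_\star]$ in $g$ around $g_0$, with the zeroth-order term killed by first-order optimality, the first-order term killed by Neyman orthogonality, and the Lagrange remainder bounded via the second display in \Cref{asmp:orthogonal}. Your explicit attention to the regularity needed for the Taylor expansion along $g_t$ is a reasonable addition but not a departure from the paper's argument.
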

By Proposition \ref{prop:score} and Lemma \ref{lem:score_orthogonal}, we have
\begin{align}\label{eq:score_term}
    S_n(\theta_\star, g)^\top (\hat \theta - \theta_\star)
    & \ge -\norm{S_n(\theta_\star, g) - S(\theta_\star, g)}_{H_\star^{-1}} \norm{\hat \theta - \theta_\star}_{H_\star} + S(\theta_\star, g)^\top (\hat \theta - \theta_\star) \nonumber \\
    &\gtrsim - \sqrt{\frac{K_1^2 \log{(1/\delta)} \bar d_\star}{n}} \norm{\hat \theta - \theta_\star}_{H_\star} - \beta_2 \norm{\hat \theta - \theta_\star}_{H_\star} \norm{\hat g - g_0}_{\calG}^2.
\end{align}

For the Hessian term, with $\hat g$ replaced by $g$, $\lVert \hat \theta - \theta_\star \rVert_{H_n(\bar \theta, g)}$ in \eqref{eq:empirical_risk_taylor}, we control it using pseudo self-concordance and a covering number argument.
\begin{proposition}\label{prop:hessian}
    Under Assumptions \ref{asmp:pseudo_self_concordance} and \ref{asmp:hessian_bernstein}, it holds, with probability at least $1 - \delta$, that
    \begin{align*}
        \frac{\kappa}{4 e^{Rr_1}} H_\star \preceq H_n(\theta, g) \preceq 3 \mathcal{K} e^{Rr_1} H_\star, \quad \mbox{for all } \theta \in \Theta_{\bar r_1}(\theta_\star), g \in \calG_{r_2}(g_0),
    \end{align*}
    whenever $n \ge 16(K_{2}^2 + 2\sigma_H^2) [\log{(4d/\delta)} + d\log{(3Rr_1/\log{2})}]^2$.
\end{proposition}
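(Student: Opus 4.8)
The plan is to combine pseudo self-concordance (\Cref{asmp:pseudo_self_concordance}), which lets me move the base point $\theta$ around deterministically, with a matrix Bernstein inequality that controls the empirical Hessian at a few anchor points, and finally the Hessian stability \eqref{eq:hessian_stable} to rewrite everything through $H_\star$. First I would fix $g \in \calG_{r_2}(g_0)$ throughout: since none of the quantities $\sigma_H^2$, $K_2$, $\kappa$, $\mathcal{K}$ nor the net constructed below depend on $g$, it is enough to prove the bound for an arbitrary fixed $g$, and the data-dependent $\hat g$ is then accommodated through the independence of $\hat g$ and $\{Z_i\}_{i=1}^n$ via \Cref{lem:independence}.

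\emph{Step 1 (self-concordance to a finite net).} I would invoke the standard consequence of pseudo self-concordance \citep{bach2010self}: a function $f$ that is pseudo self-concordant with parameter $R$ satisfies $\nabla^2 f(y) \preceq e^{R\norm{y - x}_2}\, \nabla^2 f(x)$ for $x, y$ in its domain. Applying this to each $\ell(\cdot, g; z)$ and to $L(\cdot, g)$, summing the resulting Loewner inequalities over $Z_1, \dots, Z_n$, and using that $\theta \in \Theta_{\bar r_1}(\theta_\star)$ forces $\norm{\theta - \theta_\star}_2 \le \norm{\theta - \theta_\star}_{H_\star}/\sqrt{\lambda_{\min}(H_\star)} < r_1$, I get
\[
e^{-R\norm{\theta - \theta'}_2}\, H_n(\theta', g) \preceq H_n(\theta, g) \preceq e^{R\norm{\theta - \theta'}_2}\, H_n(\theta', g)
\]
for all $\theta, \theta' \in \Theta_{\bar r_1}(\theta_\star)$, and likewise with $H(\cdot, g)$ in place of $H_n(\cdot, g)$. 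I would then take a $(\log 2/R)$-net $\calN$ of $\Theta_{\bar r_1}(\theta_\star)$ in $\norm{\cdot}_2$, of cardinality $\abs{\calN} \le (1 + 2Rr_1/\log 2)^d \le (3Rr_1/\log 2)^d$; crossing a single net cell then costs only the factor $e^{R(\log 2/R)} = 2$, so it suffices to control $H_n$ at the points of $\calN$.

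\emph{Step 2 (matrix Bernstein at the net points).} Fix $\theta_j \in \calN$. By \Cref{asmp:hessian_bernstein}, the i.i.d.\ matrices $W_i := H(\theta_j, g)^{-1/2} H(\theta_j, g; Z_i) H(\theta_j, g)^{-1/2}$ satisfy $\Expect W_i = I_d$, obey a Bernstein condition with parameter $K_2$, and have $\anorm{\Var(W_i)}_2 \le \sigma_H^2$. A matrix Bernstein inequality then gives $\anorm{\tfrac1n\sum_{i=1}^n W_i - I_d}_2 \le \tfrac12$ with probability at least $1 - \delta/\abs{\calN}$ as soon as $n \gtrsim (\sigma_H^2 + K_2^2)\,[\log(d\abs{\calN}/\delta)]^2$ — the bracket (and $K_2$) enter squared because both the sub-Gaussian and the sub-exponential part of the tail must be driven below a constant. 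Since $\log(d\abs{\calN}/\delta) \le \log(4d/\delta) + d\log(3Rr_1/\log 2)$, the sample-size hypothesis of the proposition is enough; a union bound over $\calN$ then gives, with probability at least $1-\delta$, $\tfrac12 H(\theta_j, g) \preceq H_n(\theta_j, g) \preceq \tfrac32 H(\theta_j, g)$ for every $\theta_j \in \calN$ at once.

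\emph{Step 3 (assembly), and the step I expect to be hardest.} For $\theta \in \Theta_{\bar r_1}(\theta_\star)$ I would pick $\theta_j \in \calN$ with $\norm{\theta - \theta_j}_2 \le \log 2/R$, chain the empirical bound of Step 1 (a factor $2$ each way) with Step 2 to get $\tfrac14 H(\theta_j, g) \preceq H_n(\theta, g) \preceq 3\, H(\theta_j, g)$, then apply the population bound of Step 1 with base point $\theta_\star$ (using $\norm{\theta_j - \theta_\star}_2 < r_1$) to get $e^{-Rr_1} H(\theta_\star, g) \preceq H(\theta_j, g) \preceq e^{Rr_1} H(\theta_\star, g)$, and finally \eqref{eq:hessian_stable} to get $\kappa H_\star \preceq H(\theta_\star, g) \preceq \mathcal{K} H_\star$; composing the three two-sided bounds yields exactly $\tfrac{\kappa}{4e^{Rr_1}} H_\star \preceq H_n(\theta, g) \preceq 3\mathcal{K} e^{Rr_1} H_\star$. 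I expect the delicate part to be precisely this bookkeeping of constants: the net resolution has to be chosen as exactly $\log 2/R$ so that the self-concordance factor is $2$ and combines cleanly with the $\tfrac12,\tfrac32$ from Bernstein to give the stated $\tfrac14$ and $3$, and self-concordance must be used twice — once empirically to interpolate between net points, once on the population Hessian to transport $H(\theta_j, g)$ back to the anchor $\theta_\star$, the only point where \eqref{eq:hessian_stable} lives. The one conceptual caveat is that uniform-over-$g$ concentration of $H_n(\theta_\star, g)$ is unavailable when $\calG$ is infinite dimensional, which is exactly why the whole argument is run at a fixed $g$ and the passage to $\hat g$ is postponed to \Cref{lem:independence}.
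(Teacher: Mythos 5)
Your proposal is correct and follows essentially the same route as the paper's proof: pseudo self-concordance to reduce to a net of resolution $\log 2/R$ (the paper builds the net in $\norm{\cdot}_{H_\star}$ at scale $\sqrt{\lambda_{\min}(H_\star)}\log 2/R$, which is equivalent to your Euclidean net and yields the same cardinality bound $(3Rr_1/\log 2)^d$), matrix Bernstein plus a union bound at the net points, and then population self-concordance combined with \eqref{eq:hessian_stable} to anchor everything at $H_\star$, with the constants $\kappa/(4e^{Rr_1})$ and $3\mathcal{K}e^{Rr_1}$ assembled exactly as you describe. Your closing remark that the argument is run at a fixed $g$ with the passage to $\hat g$ deferred to \Cref{lem:independence} also matches the paper, whose proof likewise opens by fixing an arbitrary $g \in \calG_{r_2}(g_0)$.
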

As a consequence of Proposition \ref{prop:hessian}, we have
\begin{align}\label{eq:hessian_term}
    \frac12 \norm{\hat \theta - \theta_\star}_{H_n(\bar \theta, g)}^2 \gtrsim \frac{\kappa}{e^{Rr_1}} \norm{\hat \theta - \theta_\star}_{H_\star}^2.
\end{align}
Putting together \eqref{eq:empirical_risk_taylor}, \eqref{eq:score_term} and \eqref{eq:hessian_term} leads to an upper bound on $\lVert \hat \theta - \theta_\star \rVert_{H_\star}$ and thus an upper bound on the excess risk $\erisk(\hat \theta, g_0)$.

When Neyman orthogonality fails to hold, we can replace Lemma \ref{lem:score_orthogonal} by the following lemma and repeat the above steps to obtain the slow rate.
\begin{lemma}\label{lem:score}
    Under \Cref{asmp:smooth}, it holds that
    \begin{align*}
        S(\theta_\star, g)^\top (\theta - \theta_\star) \ge -\beta_1 \norm{\theta - \theta_\star}_{H_\star} \norm{g - g_0}_\calG, \quad \mbox{for all } \theta \in \Theta_{\bar r_1}(\theta_0) \mbox{ and } g \in \calG_{r_2}(g_0).
    \end{align*}
\end{lemma}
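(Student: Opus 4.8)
The plan is to mirror the proof of Lemma~\ref{lem:score_orthogonal}, but replace its second-order Taylor expansion of the score in the nuisance direction by a first-order one, so that the single smoothness bound of Assumption~\ref{asmp:smooth} suffices and no cancellation from Neyman orthogonality is needed. Fix $\theta \in \Theta_{\bar r_1}(\theta_\star)$ and $g \in \calG_{r_2}(g_0)$, and introduce the nuisance path $g_t := g_0 + t(g - g_0)$ for $t \in [0,1]$ together with the scalar function $\varphi(t) := \rmD_\theta L(\theta_\star, g_t)[\theta - \theta_\star]$. First I would check that the whole segment stays in the neighborhood: since $\calG_{r_2}(g_0)$ is a norm ball centered at $g_0$, we have $\norm{g_t - g_0}_\calG = t \norm{g - g_0}_\calG \le \norm{g - g_0}_\calG < r_2$, so $g_t \in \calG_{r_2}(g_0)$ for every $t$. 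This is precisely what licenses applying Assumption~\ref{asmp:smooth} along the path.

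Next I would identify the endpoints and the derivative of $\varphi$. At $t = 0$, $\varphi(0) = \rmD_\theta L(\theta_\star, g_0)[\theta - \theta_\star] = S(\theta_\star, g_0)^\top(\theta - \theta_\star) = 0$ by the first-order optimality of $\theta_\star$ for $L(\cdot, g_0)$; at $t = 1$, $\varphi(1) = S(\theta_\star, g)^\top(\theta - \theta_\star)$. Under the standing differentiability and regularity assumptions on $\ell$ (hence on $L$), the chain rule gives $\varphi'(t) = \rmD_g \rmD_\theta L(\theta_\star, g_t)[\theta - \theta_\star, g - g_0]$, and the fundamental theorem of calculus yields
\[
    S(\theta_\star, g)^\top(\theta - \theta_\star) = \varphi(1) - \varphi(0) = \int_0^1 \rmD_g \rmD_\theta L(\theta_\star, g_t)[\theta - \theta_\star, g - g_0] \, \D t.
\]

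Finally I would bound the integrand pointwise. Applying Assumption~\ref{asmp:smooth} with $\bar g = g_t \in \calG_{r_2}(g_0)$ gives $\abs{\rmD_g \rmD_\theta L(\theta_\star, g_t)[\theta - \theta_\star, g - g_0]} \le \beta_1 \norm{\theta - \theta_\star}_{H_\star} \norm{g - g_0}_\calG$ uniformly in $t \in [0,1]$, so integrating over $[0,1]$ gives $\abs{S(\theta_\star, g)^\top(\theta - \theta_\star)} \le \beta_1 \norm{\theta - \theta_\star}_{H_\star} \norm{g - g_0}_\calG$, which in particular implies the stated lower bound. There is no substantive obstacle here: the only points needing a little care are the convexity argument keeping the segment $\{g_t\}$ inside $\calG_{r_2}(g_0)$ and the justification of the chain rule / differentiation along the path, both of which are covered by the regularity assumptions already in force.
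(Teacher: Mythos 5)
Your proof is correct and follows essentially the same route as the paper's: expand $\rmD_\theta L(\theta_\star,\cdot)[\theta-\theta_\star]$ to first order along the segment from $g_0$ to $g$, kill the base term via $S(\theta_\star,g_0)=0$, and bound the remainder uniformly via Assumption~\ref{asmp:smooth}. The only cosmetic difference is that you use the integral form of the first-order remainder while the paper uses the mean-value form with an intermediate $\bar g$; both are licensed by the same uniform bound over $\calG_{r_2}(g_0)$.
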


\section{Discussion}\label{sec:discussion}

\begin{table}[t]
    \caption{In its simplest version (e.g., ignoring the effect of $\hat g$), our bound scales as $O(d_\star / n)$ where $d_* := \Tr(H_*^{-1/2} G_* H_*^{-1/2})$ is the effective dimension, while the bound of \cite{foster2020orthogonal} scales as $O(d'/n)$ where $d' := d^2/\lambda_{\min}(H_\star)$.
    We compare the two in different regimes of eigendecays of $G_*$ and $H_*$ assuming they share the same eigenvectors.}
    \label{tab:decay}
    \centering
    \renewcommand{\arraystretch}{1.2}
    \begin{tabular}{lccccc}
        \addlinespace[0.4em]
        \toprule
        & \multicolumn{2}{c}{\textbf{Eigendecay}} & \multicolumn{2}{c}{\textbf{Dimension Dependency}} & \textbf{Ratio} \\
        & $G_\star$ & $H_\star$ & $d_\star$ & $d'$ & $d' / d_\star$ \\
        \midrule
        Poly-Poly & $i^{-\alpha}$ & $i^{-\beta}$ & $d^{(\beta - \alpha + 1) \vee 0}$ & $d^{\beta+2}$ & $d^{(\alpha+1)\wedge(\beta+2)}$ \\
        Poly-Exp & $i^{-\alpha}$ & $ e^{-\nu i}$ & $d^{-(\alpha-1) \vee 1} e^{\nu d}$ & $d^2 e^{\nu d}$ & $d^{1 \wedge (3 - \alpha)}$ \\
        Exp-Poly & $e^{-\mu i}$ & $i^{-\beta}$ & $1$ & $d^{\beta+2}$ & $d^{\beta +2}$ \\
        Exp-Exp & $e^{-\mu i}$ & $e^{-\nu i}$ & \begin{tabular}{@{}c@{}} \qquad $d$ \mbox{ if }  $\mu = \nu$ \\ \qquad $1$ \mbox{ if } $\mu > \nu$ \\ $e^{(\nu - \mu) d}$ \mbox{ if } $\mu < \nu$ \end{tabular} & $d^2 e^{\nu d}$ & \begin{tabular}{@{}c@{}} $d e^{\nu d}$ \mbox{ if }  $\mu = \nu$ \\ $d^2 e^{\nu d}$ \mbox{ if } $\mu > \nu$ \\ $d^2 e^{\mu d}$ \mbox{ if } $\mu < \nu$ \end{tabular} \\
        \bottomrule
    \end{tabular}
\end{table}

\paragraph{Convergence rate and effective dimension.}
There are two terms in the bounds \eqref{eq:risk_bound} and \eqref{eq:risk_bound_slow}.
In the case of no nuisance parameter, the second term involving $\norm{\hat g - g}_\calG$ vanishes.
As for the first term, the profile effective dimension $\bar d_\star$ simplifies to $d_\star := \Tr(H_\star^{-1/2} G_\star H_\star^{-1/2})$ as shown in \Cref{sec:known_nuisance}.
This coincides with the result from \cite{ostrovskii2021finite} on generalized linear models, i.e., the loss is given by $\ell(\theta; Z) := \ell(Y, X^\top \theta)$.
Under a well-specified model, the effective dimension $d_\star$ becomes $d$, recovering the same rate $O(d/n)$ as in classical parametric least-squares regression \cite[see, e.g.~][Proposition 3.5]{bach2021learning}.
When the model is misspecified, the effective dimension measures the mismatch between the covariance matrix $G_\star$ and the Hessian matrix $H_\star$.
This quantity is related to the \textit{sandwich covariance} in statistics~\cite[Sec. 6.7]{wakefield2013bayesian}.

To facilitate its understanding, we summarize the effective dimension $d_\star$ in~\Cref{tab:decay} under different regimes of eigendecay, assuming that $G_\star$ and $H_\star$ share the same eigenvectors.~\Cref{tab:decay} shows that the dimension dependence can be better than $O(d)$ when the spectrum of $G_\star$ decays faster than the one of $H_\star$.
In particular, it is most favorable when the spectrum of $G_\star$ decays as $e^{-\mu i}$ and the one of $H_\star$ decays as $i^{-\beta}$.

In the case when the nuisance parameter needs to be estimated, we pay the price of not knowing the true nuisance in both of the two terms.
In this first term, we have $\bar d_\star$ rather than $d_\star$ which is the maximum effective dimension in a neighborhood of $g_0$.
As for the second term, the estimator $\hat g$ will typically have a rate of convergence $\norm{\hat g - g_0}_{\calG} = O(n^{-\varphi})$ with $\varphi < 1/2$ in high dimensions~\cite[Section 1]{chernozhukov2018double}. As a result, the term $\norm{\hat g - g_0}_\calG^2$ has a dominating effect in the bound \eqref{eq:risk_bound_slow} which is slower than $O(n^{-1})$.
If Neyman orthogonality holds as assumed in \Cref{thm:risk_bound}, we do not pay this price in the fast rate \eqref{eq:risk_bound} as long as $\varphi \ge 1/4$.
Note that Neyman orthogonality is only used in Lemma \ref{lem:score_orthogonal} to control $S(\theta_\star, g)^\top (\theta - \theta_\star)$.
If $\lvert \rmD_g \rmD_\theta L(\theta_\star, g_0)[\hat \theta - \theta_\star, \hat g - g_0] \rvert$ does not vanish but decays as $O(r_n)$, then the second term in \eqref{eq:risk_bound} will read $O(r_n^2 + \norm{\hat g - g_0}_\calG^4)$.

\paragraph{Orthogonal statistical learning and double machine learning.}
Our work lies in the framework of orthogonal statistical learning.
Under a strong convexity assumption and a Neyman orthogonality assumption on the population risk, \citet{foster2020orthogonal} obtain the rate
\begin{align}\label{eq:exist_res}
    \erisk(\hat \theta, g_0) \lesssim O\left( \frac{d^2}{n\lambda^2} + \frac{d}{\lambda^2}\norm{\hat g - g_0}_\calG^4 \right), \quad \mbox{for all } n \ge 1,
\end{align}
where $\lambda$ is the infimum of $\lambda_{\min}(H(\theta, g_0))$ over a neighborhood of $\theta_\star$ \cite[see][Theorems 1 and 3]{foster2020orthogonal}. Our results improve on theirs in several ways. When $\bar d_\star$ is at most proportional to the dimension $d$, our results improve the excess risk bound by at least a factor of $d$.
Our bounds also remove the explicit dependence on the minimum eigenvalue $\lambda$, owing to our tail assumptions \ref{asmp:score_subg} and \ref{asmp:hessian_bernstein} on the normalized score and the Hessian.
However, our bounds may depend on $\lambda$ implicitly through, e.g., the sub-Gaussian parameter $K_1$.
This dependency contributes at most a factor of $\lambda^{-1}$ for applications considered in \Cref{sec:application}.
Hence, to be more concrete, we compare $d_\star$ with $d^2 / \lambda_{\min}(H_\star)$ in different eigendecay regimes in~\Cref{tab:decay}.
For instance, when the spectrum of $G_\star$ decays as $e^{-\mu i}$ and the one of $H_\star$ decays as $i^{-\beta}$, our bound gives a rate $O(n^{-1})$ while theirs gives a rate $O(d^{\beta + 2}/n)$.

\citet{chernozhukov2018double} recently proposed a set of methods based on Neyman orthogonal scores and cross-fitting, denoted by  double or debiased machine learning (DML), to the classical problem of semi-parametric inference. There is an abundant literature on semi-parametric estimation in mathematical statistics~\citep{xia2006semi,wellner2007two} and machine learning~\citep{smola1998semiparametric,rakotomamonjy2005frames,mackey2018orthogonal,bertail2021learning} and we refer to classical books for a bibliography~\citep{bickel1993efficient,ruppert2003semiparametric,tsiatis2006semiparametric,kosorok2008introduction,van2011targeted}.
In~\cite{chernozhukov2018double}, the authors establish the asymptotic normality of their estimators when the dimension of the target parameter is kept fixed. In this work, we provide \emph{non-asymptotic} guarantees in terms of excess risk for DML under self-concordance, allowing the dimension of the target parameter to grow at the rate $d = O(n^{1/2})$. In a recent work~\citep{nekipelov2022regularised}, regularized estimators with sparsity-inducing regularization are analyzed in terms of parameter recovery under restricted convexity assumptions.

\section{Applications and Examples}\label{sec:application}
\subsection{Treatment Effect Estimation}

Let us revisit the problem of treatment effect estimation under the assumption of unconfoundedness, as presented in the introduction. Before we had a binary treatment case, and our target of inference was a one dimensional parameter. Here, to better fit our framework, we  consider a vector of predictors $D := (D^k)_{k=1}^d \in \mathbb{R}^d$, under partially linear CEF of the following form:
$$
\Expect[Y \mid D, X ] = \theta_0^\top D + \gamma_0(X).
$$

Note that, by targeting multiple coefficients $\theta \in \Theta \subset \mathbb{R}^d$, we can model not only multiple treatments,
but also heterogeneous treatment effects across different binary groups,
as well as other non-linear effects, by performing nonlinear transformations of our original treatment variable.
To illustrate, suppose $T$ is the original treatment and there is a finite dimensional feature map $D := \phi(T) =  [\phi_1(T), \dots, \phi_d(T)]$ such that $\Expect[Y \mid T, X] = \theta_0^\top \phi(T) + \gamma_{0}(X)$. Under the the assumption of uncounfoundedness conditional on $X$, the ATE of setting $T$ to $t_1$ versus $t_0$ is then given by: 
$$
\Expect[Y(T = t_1) - Y(T = t_0)] = \theta_0^\top (\phi(t_1) - \phi(t_0)).
$$
Heterogeneous effects could be estimated in a similar manner. Letting $G = [G_{1}, \dots, G_{d}]$ denote indicators for $d$ subgroups, and letting $T \in \{0,1\}$ denote the binary treatment indicator, we can define the covariates $D := TG$. With this flexibility in mind, we now examine the partially linear model in the context of our framework.

\paragraph{Multiple target coefficients in a partially linear model.}
Let the ``target'' predictors be $D := (D^k)_{k=1}^d \in \mathbb{R}^d$.
Consider the model
\begin{align*}
    D &= \alpha_0(X) + U \\
    Y &= \theta_0^\top D + \gamma_0(X) + V = \zeta_0(X) + \theta_0^\top U + V,
\end{align*}
where $\alpha_0 : \reals^p \rightarrow \reals^d$, $\Expect[U \mid X] \txtover{a.s.}{=} 0$ and $\Expect[V \mid D, X] \txtover{a.s.}{=} 0$ are the residuals.
Moreover, $U$ has a non-singular covariance $\Sigma_u$ and $V$ is independent of $D$ and $X$ with variance $\sigma_v^2 > 0$.
We reparametrize the model by $g = (\zeta, \alpha)$ and work with the loss
\begin{align*}
    \ell(\theta, g; Z) := [Y - \zeta(X) - \theta^\top (D - \alpha(X))]^2.
\end{align*}
Since $\Expect[UV] = \Expect[(D - \alpha_0(X)) V] = 0$,
we have
\begin{align*}
    L(\theta, g)
    &:= \Expect[[Y - \zeta(X) - \theta^\top (D - \alpha(X))]^2] \\
    &= \Expect\left[ \left( \zeta_0(X) - \zeta(X) - \theta^\top (\alpha_0(X) - \alpha(X)) \right)^2 \right] + \norm{\theta - \theta_0}_{\Sigma_u}^2 + \sigma_v^2.
\end{align*}
This implies that the population risk $L$ at $g_0$ has a unique minimizer $\theta_\star = \theta_0$.

Now suppose that $U$ is bounded (i.e., $\norm{U}_2 \le M$), $V$ is sub-Gaussian with parameter $\norm{V}_{\psi_2}$, and $\norm{\cdot}_{\calG}$ is chosen as the sup-norm, i.e., $\norm{g}_{\calG} = \sup_{x} \sqrt{\norm{\alpha(x)}^2 + \zeta^2(x)}$.
Let us verify the assumptions in \Cref{sub:asmp} for this model.
For \Cref{asmp:score_subg}, we have
\begin{align*}
    S(\theta_\star, g; Z)
    &= 2(\alpha_0(X) - \alpha(X) + U)\left[ (\alpha_0(X) - \alpha(X))^\top \theta_\star - (\zeta_0(X) - \zeta(X) + V) \right], \\
    S(\theta_\star, g)
    &= 2\Expect\left[ (\alpha_0(X) - \alpha(X)) (\alpha_0(X) - \alpha(X))^\top \theta_\star - (\alpha_0(X) - \alpha(X))(\zeta_0(X) - \zeta(X)) \right],
\end{align*}
and $G(\theta_\star, g) \succeq 4\sigma_v^2 \Sigma_u$.
Note that $\norm{\alpha_0(X) - \alpha(X)} \le \norm{g - g_0}_\calG \le r_2$, $\norm{\zeta_0(X) - \zeta(X)} \le \norm{g - g_0}_\calG \le r_2$, $\norm{U}_2 \le M$, and $V$ is sub-Gaussian.
Hence, it follows from Lemmas \ref{lem:bounded_rv},\ref{lem:prod_bounded_subg} and \ref{lem:matrix_vec_subg} that the normalized score is sub-Gaussian with sub-Gaussian norm
\begin{align*}
    K_1 \lesssim \frac{(r_2 + M) [r_2(\norm{\theta_\star}_2 + 1) + \norm{V}_{\psi_2}]}{\sigma_v \sqrt{\lambda_{\min}(\Sigma_u)}}.
\end{align*}

For \Cref{asmp:orthogonal}, it holds that, for any $\bar \theta \in \Theta$,
\begin{align*}
    \rmD_g \rmD_\theta L(\bar \theta, g_0)[\theta - \theta_\star, g - g_0] \equiv 0,
\end{align*}
which verifies the Neyman orthogonality.
Moreover, we have $H_\star = \nabla_\theta^2 L(\theta_\star, g_0) = 2 \Sigma_u$ and
\begin{align*}
    &\quad \abs{\rmD_g^2 \rmD_\theta L(\theta_\star, \bar g)[\theta - \theta_\star, g - g_0, g - g_0]} \\
    &= \abs{4(\theta - \theta_\star)^\top \Expect\left\{ (\alpha_0(X) - \alpha(X)) \left[ (\alpha_0(X) - \alpha(X))^\top \theta_\star - (\zeta_0(X) - \zeta(X)) \right] \right\}} \\
    &\lesssim \frac{\norm{\theta_\star}_2 + 1}{\sqrt{\lambda_{\min}(\Sigma_u)}} \norm{\theta - \theta_\star}_{H_\star} \norm{g - g_0}_{\calG}^2.
\end{align*}
In other words, \Cref{asmp:orthogonal} holds true with $\beta_2 \lesssim (\norm{\theta_\star}_2 + 1)/\sqrt{\lambda_{\min}(\Sigma_u)}$.

For \cref{asmp:pseudo_self_concordance}, both the loss $\ell$ and the population risk $L$ are pseudo self-concordant with arbitrary parameter $R \ge 0$ since their third derivatives w.r.t.~$\theta$ are zero.
For \cref{asmp:hessian_bernstein}, we have
\begin{align*}
    H(\theta, g; Z) &= 2(\alpha_0(X) - \alpha(X) + U) (\alpha_0(X) - \alpha(X) + U)^\top, \\
    H(\theta, g) &= 2\Expect[(\alpha_0(X) - \alpha(X) + U) (\alpha_0(X) - \alpha(X) + U)^\top] \succeq 2\Sigma_u.
\end{align*}
Note that $H(\theta, g)^{-1/2} H(\theta, g; Z) H(\theta, g)^{-1/2} - I_d$ has mean-zero and satisfies
\begin{align*}
    \anorm{H(\theta, g)^{-1/2} H(\theta, g; Z) H(\theta, g)^{-1/2} - I_d}_2
    \lesssim \norm{H(\theta, g)^{-1}}_2 [\norm{g - g_0}_\calG^2 + \norm{U}_2^2] \lesssim \frac{r_2^2 + M^2}{\lambda_{\min}(\Sigma_u)}.
\end{align*}
Hence, it follows from \citet[Equation 6.30]{wainwright2019high} that
\begin{align*}
    H(\theta, g)^{-1/2} H(\theta, g; Z) H(\theta, g)^{-1/2} - I_d
\end{align*}
satisfies the Bernstein condition with parameter $K_2 \lesssim (r_2^2 + M^2)/\lambda_{\min}(\Sigma_u)$.
Moreover, $\sigma_H^2 \lesssim (r_2^2 + M^2)^2/\lambda_{\min}^2(\Sigma_u)$.
For the stability \eqref{eq:hessian_stable}, we have $H_\star = 2 \Sigma_u$ and
\begin{align*}
    2\Sigma_u \preceq H(\theta_\star, g) = 2\Expect[(\alpha_0(X) - \alpha(X) + U) (\alpha_0(X) - \alpha(X) + U)^\top] \preceq 2[r_2^2 I_d + \Sigma_u].
\end{align*}
Thus, the stability holds with $\kappa = 1$ and $\calK = 1 + r_2^2 / \lambda_{\min}(\Sigma_u)$.
To summarize, invoking \Cref{thm:risk_bound} gives the following risk bound up to a constant factor:
\begin{align}
    \frac{(r_2 + M)^2[r_2(\norm{\theta_\star}_2 + 1) + \norm{V}_{\psi_2}]^2}{\sigma_v^2 \lambda_{\min}(\Sigma_u)} \frac{\bar d_\star}{n} \log{(1/\delta)} + \frac{(\norm{\theta_\star}_2 + 1)^2}{\lambda_{\min}(\Sigma_u)} \norm{\hat g - g_0}_{\calG}^4.
\end{align}
\begin{remark}
    As a comparison, assuming $\norm{U}_2 \le M$, $\norm{V}_2 \le M'$, and $R := \sup_{\theta \in \Theta} \norm{\theta}_2 \vee 1 < \infty$, Theorems 1 and 3 of \citet{foster2020orthogonal} yield the bound
    \begin{align*}
        \frac{\bar K^2}{\lambda_{\min}(\Sigma_u)^2} \frac{d^2}{n} \log{(1/\delta)} + \frac{R\bar K d}{\lambda_{\min}(\Sigma_u)^2} \norm{\hat g - g_0}_{\calG}^4,
    \end{align*}
    where $\bar K := (r_2 + M)[r_2(R+1) + M' + RM]$.
    Our result not only requires less stringent assumptions but also improves their result by a factor of $d/\lambda_{\min}(\Sigma_u)$ when $\bar d_\star \lesssim d$.
\end{remark}

\subsection{Semi-Parametric Logistic Regression}
We consider a semi-parametric logistic regression model to illustrate the usefulness of the pseudo self-concordance assumption.

Let $Z := (X, W, Y)$ where $X \in \reals^d$, $W \in \mathcal{W}$, and $Y \in \{-1, 1\}$.
Consider the model
\begin{align*}
    \Prob(Y = 1 \mid X, W) = \sigma\left( \theta_0^\top X + g_0(W) \right),
\end{align*}
where $\sigma(u) := (1 + e^{-u})^{-1}$.
It is clear that
\begin{align*}
    \Expect[Y \mid X, W]
    &= \Prob(Y = 1 \mid X, W) - \Prob(Y = -1 \mid X, W)
    = 2\sigma(\theta_0^\top X + g_0(W)) - 1.
\end{align*}
The logistic loss is defined as
\begin{align*}
    \ell(\theta, g; Z) := \log{\left( 1 + \exp\left(-Y(\theta^\top X + g(W)) \right) \right)}.
\end{align*}
It can be shown that
\begin{align*}
    S(\theta, g; Z) = \left[ \sigma(\theta^\top X + g(W)) - \frac12 - \frac{Y}2 \right]X
\end{align*}
and
\begin{align*}
    S(\theta, g) &= \Expect[\Expect[S(\theta, g; Z) \mid X, W]] = \Expect\left\{ X \left[ \sigma(\theta^\top X + g(W)) - \sigma(\theta_0^\top X + g_0(W)) \right] \right\} \\
    H(\theta, g) &= \Expect\left[ \sigma(\theta^\top X + g(W))[1 - \sigma(\theta^\top X + g(W))] XX^\top \right] \\
    G(\theta, g) &= \Expect\left\{ [ \sigma(\theta^\top X + g(W)) - \sigma(\theta_0^\top X + g_0(W)) ]^2 XX^\top \right\} - S(\theta, g) S(\theta, g)^\top + H(\theta_0, g_0).
\end{align*}
Assume that $H_\star := H(\theta_0, g_0)$ is non-singular.
The population risk $L(\theta, g_0)$ is minimized at $\theta_\star = \theta_0$.

Suppose that $X$ is bounded (i.e., $\norm{X}_2 \le M$), $r_1 \le \lambda_{\min}(H_\star)/M^3$, and $r_2 \le \lambda_{\min}(H_\star)/M^2$.
By the non-singularity of $H_\star$, the covariance $G(\theta, g)$ is non-singular for all $\theta \in \Theta$ and $g \in \calG$.
Let us verify the assumptions in \Cref{sub:asmp} for this model.
For \Cref{asmp:score_subg}, it follows directly from Lemmas \ref{lem:bounded_rv} and \ref{lem:matrix_vec_subg} that the normalized score is sub-Gaussian with sub-Gaussian norm $K_1 \lesssim M / \sqrt{\lambda_{\min}(H_\star)}$.
\Cref{asmp:smooth} holds true with $\beta_1 := M/(4\sqrt{\lambda_{\min}(H_\star)})$ since
\begin{align*}
    &\quad \abs{\rmD_g \rmD_\theta L(\theta_\star, \bar g)[\theta - \theta_\star, g - g_0]} \\
    &= \abs{\Expect\left[ \sigma(\theta_\star^\top X + \bar g(W))[1 - \sigma(\theta_\star^\top X + \bar g(W))] X^\top (\theta - \theta_\star) (g(W) - g_0(W)) \right]} \\
    &\le \frac{M}{4\sqrt{\lambda_{\min}(H_\star)}} \norm{\theta - \theta_\star}_{H_\star} \norm{g - g_0}_{\calG}.
\end{align*}
For \Cref{asmp:pseudo_self_concordance}, we have, with $a := \theta^\top x + g(w)$,
\begin{align*}
    \abs{\rmD_\theta^3 \ell(\theta, g; z)[u, u, u]}
    &= \abs{\sigma(a)[1 - \sigma(a)][1 - 2\sigma(a)] (u^\top x)^3} \\
    &\le \abs{\sigma(a)[1 - \sigma(a)] \norm{u}_2 \norm{x}_2 (u^\top x)^2} \\
    &\le M \norm{u}_2 \rmD_\theta^2 \ell(\theta, g; z)[u, u],
\end{align*}
which implies that $\ell(\cdot, g; z)$ is pseudo self-concordance with parameter $R = M$.
The pseudo self-concordance of $L(\theta, g)$ can be verified similarly.

For \Cref{asmp:hessian_bernstein}, we first show that $H(\theta, g)$ is non-singular on $\Theta_{\bar r_1}(\theta_\star) \times \calG_{r_2}(g_0)$.
In fact, with $A := \theta^\top X + g(W)$ and $A_0 := \theta_0^\top X + g_0(W)$, we have
\begin{align*}
    \anorm{H(\theta, g) - H_\star}_2
    &\le \Expect\left[ \abs{\sigma(A) - \sigma(A_0)} \anorm{XX^\top}_2 \right] \\
    &\le \frac14 \Expect\left[ [\lvert\theta^\top X - \theta_0^\top X\rvert + \abs{g(W) - g_0(W)}] \anorm{XX^\top}_2 \right] \\
    &\le \frac14 \left( r_1 M^3 + r_2 M^2 \right), \quad \mbox{for all } (\theta, g) \in \Theta_{\bar r_1}(\theta_\star) \times \calG_{r_2}(g_0).
\end{align*}
This yields that
\begin{align*}
    \anorm{H_\star^{-1/2} H(\theta,g) H_\star^{-1/2} - I_d}_2 \le \frac{1}{4\lambda_{\min}(H_\star)}\left( r_1 M^3 + r_2 M^2 \right) \le \frac12,
\end{align*}
and thus $H(\theta, g) \succeq I_d/2$ for all $(\theta, g) \in \Theta_{\bar r_1}(\theta_\star) \times \calG_{r_2}(g_0)$.
Analogously, we can show that
\begin{align*}
    \anorm{H_\star^{-1/2} H(\theta_\star, g) H_\star^{-1/2} - I_d}_2 \le \frac{r_2 M^2}{4\lambda_{\min}(H_\star)} \le \frac14,
\end{align*}
and thus the stability \eqref{eq:hessian_stable} holds true with $\kappa = 3/4$ and $\mathcal{K} = 5/4$.
As for the Bernstein condition, we note that
\begin{align*}
    \anorm{H(\theta, g; Z)}_2 = \anorm{\sigma(\theta^\top X + g(W))[1 - \sigma(\theta^\top X + g(W))] XX^\top}_2 \le \frac{M^2}{4}.
\end{align*}
It follows that
\begin{align*}
    \anorm{H(\theta, g)^{-1/2} H(\theta, g; Z) H(\theta, g)^{-1/2} - I_d}_2 \le \frac{M^2}{2 \lambda_{\min}(H(\theta, g))} \le M^2.
\end{align*}
Due to \cite[Equation 6.30]{wainwright2019high}, $H(\theta, g)^{-1/2} H(\theta, g; Z) H(\theta, g)^{-1/2} - I_d$ satisfies the Bernstein condition with parameter $K_2 \lesssim M^2$.
Moreover, $\sigma_H^2 \lesssim M^4$.
To summarize, invoking \Cref{thm:risk_bound_slow} gives the following risk bound up to a constant factor:
\begin{align*}
    \frac{M^2}{\lambda_{\min}(H_\star)} \left[ \frac{\bar d_\star}{n} \log{(1/\delta)} + \norm{\hat g - g_0}_{\calG}^2 \right].
\end{align*}

\begin{remark}
    Since the semi-parametric logistic loss does not satisfy the Neyman orthogonality, the results of \cite{foster2020orthogonal} do not directly apply here.
\end{remark}

\section{Conclusion}\label{sec:conclusion}
We established non-asymptotic guarantees in terms of the excess risk for the orthogonal statistical learning under pseudo self-concordance, allowing the dimension of the target parameter to grow at the rate $d = O(n^{1/2})$.
The dimension dependency in our bound is characterized by the effective dimension---the trace of the sandwich covariance matrix---which recovers existing results in supervised learning without the nuisance parameter.
Compared with previous work \citep{foster2020orthogonal},
our results improve on the excess risk bound at least by a factor of $d$ in a wide range of eigendecay regimes. The extension of our theoretical analysis to handle sparse regularization is an interesting venue for future work.

\clearpage

\subsubsection*{Acknowledgements}
The authors would like to thank Jon Wellner for fruitful discussions.
L. Liu is supported by NSF CCF-2019844 and NSF DMS-2023166.
Z. Harchaoui is supported by NSF CCF-2019844, NSF DMS-2134012, NSF DMS-2023166, CIFAR-LMB, and faculty research awards.
Part of this work was done while Z. Harchaoui was visiting the Simons Institute for the Theory of Computing.

\bibliographystyle{abbrvnat}
\bibliography{biblio}

\clearpage
\appendix

The Appendix is organized as follows.
For simplicity, we first prove in \Cref{sec:known_nuisance} the main results assuming that the true nuisance parameter $g_0$ is known.
We then prove in \Cref{sec:proof} the main results presented in \Cref{sec:thms}.
The technical tools used in the proofs are reviewed and developed in \Cref{sec:tools}.

\section{Risk Bound with Known Nuisance Parameter}
\label{sec:known_nuisance}
In this section, we assume that the true nuisance parameter $g_0$ is known and control the excess risk.
The proofs in this section are inspired by and extend those from~\cite{ostrovskii2021finite}.
We denote by $\hat \theta_0$ the minimizer of the empirical risk $L_n(\theta, g_0)$.
Our analysis is local to $\theta_\star$, in other words, we make the following assumption on $\hat \theta_0$.
Recall that $\Theta_{r}(\theta_\star) := \{\theta \in \Theta: \norm{\theta - \theta_\star}_{H_\star} < r\}$.

\begin{customasmp}{6}\label{asmp:local_known_nuisance}
    Let $r_0 > 0$ be a constant and $\tilde r_0 := \min\{r_0, r_0 \sqrt{\lambda_{\min}(H_\star)}\} > 0$.
    There exists a function $N_{\tilde r_0}: [0, 1] \rightarrow \bbN_+$ such that for any $\delta \in (0, 1)$ we have, with probability at least $1 - \delta$, $\hat \theta_0 \in \Theta_{\tilde r_0}(\theta_\star)$ for all $n \ge N_{\tilde r_0}(\delta)$.
\end{customasmp}

\paragraph{Control of the score.}
In order to control the score, we assume that the normalized score at $\theta_\star$ is sub-Gaussian.
\begin{customasmp}{7}\label{asmp:subg_score_known_nuisance}
    The normalized score at $\theta_0$ is sub-Gaussian, i.e., there exists a constant $K_{1,0} > 0$ such that
    \begin{equation*}
    \norm{G_\star^{-1/2}[S(\theta_\star, g_0; Z) - S(\theta_\star, g_0)]}_{\psi_2} \le K_{1,0} .
    \end{equation*}
\end{customasmp}

Recall that $d_\star := \Tr(\Omega_\star)$ and $\Omega_\star := H_\star^{-1/2} G_\star H_\star^{-1/2}$.
\begin{proposition}\label{prop:score_known_nuisance}
    Under \Cref{asmp:subg_score_known_nuisance}, it holds that, with probability at least $1 - \delta$,
    \begin{align*}
        \norm{S_n(\theta_\star, g_0)}_{H_\star^{-1}}^{2} \lesssim \frac1n[d_\star + K_{1,0}^2 \log{(e / \delta)} \norm{\Omega_\star}_2],
    \end{align*}
    where $\lesssim$ hides an absolute constant.
\end{proposition}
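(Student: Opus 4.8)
The plan is to reduce the claim to a standard concentration inequality for quadratic forms of sub-Gaussian random vectors. First, since $\theta_\star$ minimizes $L(\cdot, g_0)$ and $H_\star$ is invertible, the first-order optimality condition gives $S(\theta_\star, g_0) = \nabla_\theta L(\theta_\star, g_0) = 0$; hence $S_n(\theta_\star, g_0) = \frac1n \sum_{i=1}^n S(\theta_\star, g_0; Z_i)$ is already a centered average of i.i.d.\ terms, and no recentering is required. Next I would whiten the summands: set $\xi_i := G_\star^{-1/2}[S(\theta_\star, g_0; Z_i) - S(\theta_\star, g_0)] = G_\star^{-1/2} S(\theta_\star, g_0; Z_i)$, which are i.i.d., mean zero, with $\Cov(\xi_i) = I_d$ and, by \Cref{asmp:subg_score_known_nuisance}, $\norm{\xi_i}_{\psi_2} \le K_{1,0}$. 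Writing $v := \frac1{\sqrt n}\sum_{i=1}^n \xi_i$, an i.i.d.\ average, one has $\Cov(v) = I_d$ and $\norm{v}_{\psi_2} \lesssim K_{1,0}$, and a direct computation gives
\begin{align*}
    \norm{S_n(\theta_\star, g_0)}_{H_\star^{-1}}^2 = \frac1n\, v^\top \bigl(G_\star^{1/2} H_\star^{-1} G_\star^{1/2}\bigr) v.
\end{align*}
The matrix $M := G_\star^{1/2} H_\star^{-1} G_\star^{1/2}$ is positive semidefinite and, being of the form $B^\top B$ with $B := H_\star^{-1/2} G_\star^{1/2}$ while $\Omega_\star = B B^\top$, has exactly the same eigenvalues as $\Omega_\star$. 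Consequently $\Tr(M) = d_\star$, $\norm{M}_2 = \norm{\Omega_\star}_2$, and $\norm{M}_F^2 = \Tr(M^2) \le \norm{\Omega_\star}_2\, d_\star$.

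It remains to control the quadratic form $v^\top M v$, whose mean is exactly $\Tr(M\,\Cov(v)) = \Tr(M) = d_\star$ (this uses only the covariance, not sub-Gaussianity, which is why the leading term of the final bound carries no $K_{1,0}$ factor). Invoking the Hanson--Wright-type deviation bound for quadratic forms of sub-Gaussian vectors developed in \Cref{sec:tools}, I would obtain, with probability at least $1 - \delta$,
\begin{align*}
    v^\top M v \le d_\star + C K_{1,0}^2 \bigl( \norm{M}_F \sqrt{\log(e/\delta)} + \norm{M}_2 \log(e/\delta) \bigr)
\end{align*}
for an absolute constant $C$. Using $\norm{M}_F \sqrt{\log(e/\delta)} \le \sqrt{\norm{\Omega_\star}_2\, d_\star\, \log(e/\delta)} \le \tfrac12\bigl(d_\star + \norm{\Omega_\star}_2 \log(e/\delta)\bigr)$ by AM--GM, folding one half into the leading term and the other into the tail term, and dividing by $n$, yields $\norm{S_n(\theta_\star, g_0)}_{H_\star^{-1}}^2 \lesssim \frac1n\bigl[d_\star + K_{1,0}^2 \log(e/\delta)\, \norm{\Omega_\star}_2\bigr]$, which is the claim.

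The only genuinely delicate point I anticipate is applying the quadratic-form concentration so as to produce the \emph{effective} dimension $d_\star = \Tr(\Omega_\star)$ in the leading term rather than the ambient dimension $d$: a crude bound $\norm{v}_M^2 \le \norm{M}_2 \norm{v}_2^2$ followed by a sub-Gaussian norm estimate for $\norm{v}_2^2$ would only give the inferior $\norm{\Omega_\star}_2(d + \log(1/\delta))$, so the trace must enter through the exact mean of the quadratic form, which forces the Bernstein-flavored (centered) form of Hanson--Wright rather than the bare sub-Gaussian-vector version. A secondary subtlety is that $v$ is an i.i.d.\ average and hence does not have independent coordinates; this is circumvented by viewing $v^\top M v$ as a quadratic form $\zeta^\top A \zeta$ in the vector $\zeta := (\xi_1, \dots, \xi_n) \in \reals^{nd}$ of \emph{independent blocks}, with $A = \frac1n (\mathbf{1}_n \mathbf{1}_n^\top) \otimes M$, whose trace, operator norm, and Frobenius norm are $d_\star$, $\norm{\Omega_\star}_2$, and at most $\sqrt{\norm{\Omega_\star}_2\, d_\star}$ respectively — precisely the quantities used above — so that the block/vector form of the inequality from \Cref{sec:tools} applies. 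The remaining steps are routine.
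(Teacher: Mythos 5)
Your proposal is correct and follows essentially the same route as the paper: after using first-order optimality to get $S(\theta_\star,g_0)=0$, the paper also whitens by $G_\star^{-1/2}$, notes that $X:=\sqrt{n}\,G_\star^{-1/2}S_n(\theta_\star,g_0)$ is isotropic with $\norm{X}_{\psi_2}\lesssim K_{1,0}$ via Lemma~\ref{lem:sum_subg}, writes $\norm{S_n(\theta_\star,g_0)}_{H_\star^{-1}}^2=\norm{X}_J^2$ with $J:=G_\star^{1/2}H_\star^{-1}G_\star^{1/2}/n$, and invokes Theorem~\ref{thm:isotropic_tail}. Your extra bookkeeping (the spectral identification of $J$ with $\Omega_\star$ and the AM--GM absorption of the Frobenius-norm term) simply makes explicit what the paper leaves implicit in ``invoking Theorem~\ref{thm:isotropic_tail} yields the claim,'' and your concern about non-independent coordinates is moot since that theorem only requires isotropy and a sub-Gaussian norm bound.
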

\begin{proof}
    By the first order optimality condition, we have $S(\theta_\star, g_0) = 0$.
    As a result,
    \begin{align*}
        X := \sqrt{n} G_\star^{-1/2} S_n(\theta_\star, g_0; Z)
    \end{align*}
    is an isotropic random vector.
    Moreover, we have $\norm{X}_{\psi_2} \lesssim K_{1,0}$ by Lemma \ref{lem:sum_subg}.
    Define $J := G_\star^{1/2} H_\star^{-1} G_\star^{1/2} / n$.
    Then we have
    \begin{align*}
        \norm{S_n(\theta_\star, g_0)}_{H_\star^{-1}}^{2} = \norm{X}_{J}^2.
    \end{align*}
    Invoking \Cref{thm:isotropic_tail} yields the claim.
\end{proof}

\paragraph{Control of the Hessian.}
In order to control the Hessian, we use the pseudo self-concordance as in Definition \ref{def:modify_self_concord}.

\begin{customasmp}{8}\label{asmp:self_concord}
    For any $z \in \calZ$, $\ell(\theta, g_0; z)$ is pseudo self-concordant on $\Theta_{\tilde r_0}(\theta_\star)$, i.e.,
    \begin{align*}
        \abs{\rmD_\theta^3 \ell(\theta, g_0; z)[u, u, u]} \le R_0 \norm{u}_2 \rmD_\theta^2 \ell(\theta, g_0; z)[u, u], \quad \mbox{for all } \theta \in \Theta_{\tilde r_0}(\theta_\star), u \in \reals^d.
    \end{align*}
    Moreover, $L(\theta, g_0)$ is pseudo self-concordant on $\Theta_{\tilde r_0}(\theta_\star)$.
\end{customasmp}

We also assume that the Hessian $H(\theta, g_0; Z)$ satisfies the Bernstein condition uniformly over $\Theta_{\tilde r_0}(\theta_\star)$.
\begin{customasmp}{9}\label{asmp:unif_subg_hessian_known_nuisance}
    For any $\theta \in \Theta_{\tilde r_0}(\theta_\star)$, the centered sandwich Hessian
    \begin{align*}
        H(\theta, g_0)^{-1/2} H(\theta, g_0; Z) H(\theta, g_0)^{-1/2} - I_d
    \end{align*}
    satisfies a Bernstein condition with parameter $K_{2,0}$.
    Moreover,
    \begin{align*}
        \sigma_{H,0}^2 := \sup_{\theta \in \Theta_{\tilde r_0}(\theta_\star)} \anorm{\Var(H(\theta, g_0)^{-1/2} H(\theta, g_0; Z) H(\theta, g_0)^{-1/2})}_2 < \infty.
    \end{align*}
\end{customasmp}

\begin{proposition}\label{prop:H_self_concord}
    Under \Cref{asmp:self_concord}, for any $\theta \in \Theta_{\tilde r_0}(\theta_\star)$, we have
    \begin{align}\label{eq:hessian_known_nuisance}
        e^{-R_0 r_0} H_\star \preceq H(\theta, g_0) \preceq e^{R_0 r_0} H_\star.
    \end{align}
    Moreover, if \Cref{asmp:unif_subg_hessian_known_nuisance} holds true, then, with probability at least $1 - \delta$, we have
    \begin{align}\label{eq:emp_hessian_known_nuisance}
        \frac{1}{4e^{R_0 r_0}} H_\star \preceq H_n(\theta, g_0) \preceq 3 e^{R_0 r_0} H_\star, \quad \mbox{for all } \theta \in \Theta_{\tilde r_0}(\theta_\star),
    \end{align}
    whenever $n \ge 16 (K_{2,0}^2 + 2\sigma_{H,0}^2) \left[ \log{(4d/\delta)} + d\log{(3r_0 R_0 / \log{2})} \right]^2$.
\end{proposition}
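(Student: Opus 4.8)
The plan is to establish the deterministic inclusion \eqref{eq:hessian_known_nuisance} first, then bootstrap it to the high-probability statement \eqref{eq:emp_hessian_known_nuisance} through a covering argument combined with matrix Bernstein concentration at the net points.

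\textbf{Deterministic bound.} By \Cref{asmp:self_concord}, $L(\cdot, g_0)$ is pseudo self-concordant with parameter $R_0$ on $\Theta_{\tilde r_0}(\theta_\star)$. A standard consequence of Definition \ref{def:modify_self_concord} (cf.\ \citet{bach2010self}) is the matrix inequality
\begin{align*}
  e^{-R_0\norm{\theta-\theta'}_2}\,H(\theta', g_0)\;\preceq\;H(\theta, g_0)\;\preceq\;e^{R_0\norm{\theta-\theta'}_2}\,H(\theta', g_0), \qquad \theta, \theta' \in \Theta_{\tilde r_0}(\theta_\star),
\end{align*}
which one derives by fixing a direction $v$, noting $\tfrac{d}{dt}\, v^\top H(\theta'+t(\theta-\theta'), g_0)\, v = \rmD_\theta^3 L(\theta'+t(\theta-\theta'), g_0)[\theta-\theta', v, v]$, bounding this by $R_0\norm{\theta-\theta'}_2\, v^\top H(\theta'+t(\theta-\theta'), g_0)\, v$ (the ``polarized'' form of the pseudo self-concordance inequality), and applying Grönwall. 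Taking $\theta' = \theta_\star$ with $H(\theta_\star, g_0) = H_\star$, and using $\norm{\theta-\theta_\star}_2 \le \norm{\theta-\theta_\star}_{H_\star}/\sqrt{\lambda_{\min}(H_\star)} < \tilde r_0/\sqrt{\lambda_{\min}(H_\star)} \le r_0$ (the last step since $\tilde r_0 \le r_0\sqrt{\lambda_{\min}(H_\star)}$) yields \eqref{eq:hessian_known_nuisance}.

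\textbf{High-probability bound.} I would proceed in four steps. \emph{(i)} Since an average of functions that are pseudo self-concordant with parameter $R_0$ is pseudo self-concordant with parameter $R_0$, \Cref{asmp:self_concord} makes $L_n(\cdot, g_0)$ pseudo self-concordant with parameter $R_0$ on $\Theta_{\tilde r_0}(\theta_\star)$; hence the matrix inequality of the previous paragraph also holds with $H_n$ in place of $H(\cdot, g_0)$. \emph{(ii)} For a \emph{fixed} $\theta \in \Theta_{\tilde r_0}(\theta_\star)$, \Cref{asmp:unif_subg_hessian_known_nuisance} gives that the centered sandwich Hessian is mean-zero, satisfies a Bernstein condition with parameter $K_{2,0}$, and has variance proxy at most $\sigma_{H,0}^2$; the matrix Bernstein inequality (\Cref{sec:tools}) then yields $\norm{H(\theta, g_0)^{-1/2} H_n(\theta, g_0) H(\theta, g_0)^{-1/2} - I_d}_2 \le \tfrac12$ with probability at least $1-\delta'$, i.e.\ $\tfrac12 H(\theta, g_0) \preceq H_n(\theta, g_0) \preceq \tfrac32 H(\theta, g_0)$, provided $n \gtrsim (K_{2,0}^2 + \sigma_{H,0}^2)\log^2(d/\delta')$. \emph{(iii)} Let $\{\theta_1,\dots,\theta_M\}$ be a $(\log 2/R_0)$-net of $\Theta_{\tilde r_0}(\theta_\star)$ in the Euclidean norm; since this set lies in the Euclidean ball of radius $r_0$ centered at $\theta_\star$, we may take $M \le (3 r_0 R_0/\log 2)^d$, so $\log M \le d\log(3 r_0 R_0/\log 2)$. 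Applying \emph{(ii)} with $\delta' = \delta/M$ and union bounding over the net, all the events of \emph{(ii)} hold simultaneously with probability at least $1-\delta$; substituting $\log(d/\delta') = \log d + \log M + \log(1/\delta)$ into the threshold of \emph{(ii)} shows this is ensured once $n \ge 16(K_{2,0}^2 + 2\sigma_{H,0}^2)[\log(4d/\delta) + d\log(3 r_0 R_0/\log 2)]^2$. Combined with \eqref{eq:hessian_known_nuisance} at each net point, this gives $\tfrac12 e^{-R_0 r_0} H_\star \preceq H_n(\theta_j, g_0) \preceq \tfrac32 e^{R_0 r_0} H_\star$ for all $j$. \emph{(iv)} For arbitrary $\theta \in \Theta_{\tilde r_0}(\theta_\star)$, pick $\theta_j$ with $\norm{\theta-\theta_j}_2 \le \log 2/R_0$; by step \emph{(i)} applied on the segment $[\theta_j, \theta] \subset \Theta_{\tilde r_0}(\theta_\star)$ (convex) together with $e^{-R_0\norm{\theta-\theta_j}_2} \ge \tfrac12$ and $e^{R_0\norm{\theta-\theta_j}_2} \le 2$, we obtain $\tfrac12 H_n(\theta_j, g_0) \preceq H_n(\theta, g_0) \preceq 2 H_n(\theta_j, g_0)$, and combining with the previous line proves \eqref{eq:emp_hessian_known_nuisance}.

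\textbf{Main obstacle.} The delicate step is \emph{(iii)}: matching constants so that, after the union bound over the net, the inverted matrix Bernstein tail yields \emph{exactly} the claimed $n \ge 16(K_{2,0}^2 + 2\sigma_{H,0}^2)[\log(4d/\delta) + d\log(3 r_0 R_0/\log 2)]^2$. This uses the clean (mildly lossy) form of matrix Bernstein that merges the sub-Gaussian and sub-exponential regimes into a single $\log^2$ term, together with a careful count of the net cardinality. A secondary point is the norm bookkeeping: the net and the increments $\norm{\theta-\theta_j}_2$ live in the Euclidean norm (so that pseudo self-concordance applies), while $\Theta_{\tilde r_0}(\theta_\star)$ is defined via the $H_\star$-norm, which is exactly why $\tilde r_0 := \min\{r_0, r_0\sqrt{\lambda_{\min}(H_\star)}\}$ is used — to translate between the two.
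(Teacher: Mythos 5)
Your proposal is correct and follows essentially the same route as the paper's proof: the deterministic comparison via the pseudo self-concordance Hessian bound (the paper cites \citet{bach2010self}, Proposition 1, where you re-derive it via Gr\"onwall), then a $(\log 2/R_0)$-scale net, matrix Bernstein at each net point, a union bound over the $(3r_0R_0/\log 2)^d$ net points, and transfer back to arbitrary $\theta$ via self-concordance of the empirical Hessian. The only cosmetic difference is that you invoke pseudo self-concordance of the averaged $L_n$ while the paper applies it pointwise to each $\ell(\cdot,g_0;Z_i)$ and averages — these are equivalent.
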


\begin{proof}
    According to \Cref{asmp:self_concord} and Proposition \ref{prop:pseudo_self_concordance}, we have
    \begin{align*}
        e^{-R_0 \norm{\theta - \theta_\star}_2} H_\star \preceq H(\theta, g_0) \preceq e^{R_0 \norm{\theta - \theta_\star}_2} H_\star.
    \end{align*}
    Hence, the claim \eqref{eq:hessian_known_nuisance} follows from $\norm{\theta - \theta_\star}_2 \le \tilde r_0 / \sqrt{\lambda_{\min}(H_\star)} \le r_0$.
    As for \eqref{eq:emp_hessian_known_nuisance}, we prove it in the following steps.

    \emph{Step 1.} Let $\epsilon = \sqrt{\lambda_{\min}(H_\star)} \log{2} / R_0$. Take an $\epsilon$-covering $\calN_\epsilon$ of $\Theta_{\tilde r_0}(\theta_\star)$ w.r.t.~$\norm{\cdot}_{H_\star}$, and let $\pi(\theta)$ be the projection of $\theta$ onto $\calN_\epsilon$.
    By the self-concordance of $\ell(\cdot, g_0; z)$ (\Cref{asmp:self_concord}), we have, for all $\theta \in \Theta_{\tilde r_0}(\theta_\star)$,
    \begin{align*}
        e^{-R_0 r} H(\pi(\theta), g_0; Z) \preceq H(\theta, g_0; Z) \preceq e^{R_0 r} H(\pi(\theta), g_0; Z),
    \end{align*}
    where $r := \norm{\theta - \pi(\theta)}_2 \le \epsilon / \sqrt{\lambda_{\min}(H_\star)} = \log{2}/R_0$.
    It then follows that
    \begin{align*}
        \frac12 H(\pi(\theta), g_0; Z_i) \preceq H(\theta, g_0; Z_i) \preceq 2 H(\pi(\theta), g_0; Z_i), \quad \mbox{for all } \theta \in \Theta_{\tilde r_0}(\theta_\star) \mbox{ and } i \in [n],
    \end{align*}
    which yields
    \begin{align}\label{eq:hessian_proj_known_nuisance}
        \frac12 H_n(\pi(\theta), g_0) \preceq H_n(\theta, g_0) \preceq 2H_n(\pi(\theta), g_0), \quad \mbox{for all } \theta \in \Theta_{\tilde r_0}(\theta_\star).
    \end{align}

    \emph{Step 2.} By \Cref{thm:bernstein_matrix}, for each $\theta \in \Theta_{\tilde r_0}(\theta_\star)$, it holds that, with probability at least $1 - \delta$,
    \begin{align*}
        \anorm{H(\theta, g_0)^{-1/2} H_n(\theta, g_0) H(\theta, g_0)^{-1/2} - I_d}_2 \le \frac{1}{2} ,
    \end{align*}
    or equivalently,
    \begin{align*}
        \frac12 H(\theta, g_0) \preceq H_n(\theta, g_0) \preceq \frac{3}{2} H(\theta, g_0)
    \end{align*}
    whenever $n \ge 16 (K_{2,0}^2 + 2\sigma_{H,0}^2) \log^2{(2d/\delta)}$.
    Since $\abs{\calN_\epsilon} \le (3 \tilde r_0/\epsilon)^d$ \citep{ostrovskii2021finite},
    by a union bound, we get, with probability at least $1 - \delta/2$,
    \begin{align}\label{eq:hessin_union_known_nuisance}
        \frac12 H(\pi(\theta), g_0) \preceq H_n(\pi(\theta), g_0) \preceq \frac{3}{2} H(\pi(\theta), g_0), \quad \mbox{for all } \theta \in \Theta_{\tilde r_0}(\theta_\star),
    \end{align}
    whenever $n \ge 16 (K_{2,0}^2 + 2\sigma_{H,0}^2) \left[ \log{(4d/\delta)} + d\log{(3r_0 R_0 / \log{2})} \right]^2$.
    Hence, the statement \eqref{eq:emp_hessian_known_nuisance} follows from \eqref{eq:hessian_known_nuisance}, \eqref{eq:hessian_proj_known_nuisance}, and \eqref{eq:hessin_union_known_nuisance}.
\end{proof}

\paragraph{Control of the excess risk.}
The next theorem shows that the excess risk is upper bounded by $d_\star/n$ up to a constant factor.

\begin{theorem}\label{thm:risk_known_nuisance}
    Under Assumptions \ref{asmp:local_known_nuisance}-\ref{asmp:unif_subg_hessian_known_nuisance}, with probability at least $1 - \delta$, the excess risk of $\hat \theta_0$ satisfies
    \begin{align}\label{eq:risk_known_nuisance}
        \erisk(\hat \theta_0, g_0) \lesssim K_{1,0}^2 e^{3R_0 r_0} \log{(1 / \delta)} \frac{d_\star}{n}
    \end{align}
    whenever $n \ge \max\{N_{\tilde r_0}(\delta/3), 16 (K_{2,0}^2 + 2\sigma_{H,0}^2) \left[ \log{(12d/\delta)} + d\log{(3r_0 R_0 / \log{2})} \right]^2\}$.
\end{theorem}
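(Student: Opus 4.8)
The plan is to run the two-Taylor-expansion argument sketched in \Cref{sec:thms}, specialized to the known-nuisance setting, where the population score term $S(\theta_\star, g_0)^\top(\hat\theta_0 - \theta_\star)$ vanishes exactly. First I would restrict to the event, guaranteed by \Cref{asmp:local_known_nuisance} whenever $n \ge N_{\tilde r_0}(\delta/3)$, on which $\hat\theta_0 \in \Theta_{\tilde r_0}(\theta_\star)$. On this event, Taylor's theorem applied to $L(\cdot, g_0)$ gives $\erisk(\hat\theta_0, g_0) = S(\theta_\star, g_0)^\top(\hat\theta_0 - \theta_\star) + \tfrac12 \norm{\hat\theta_0 - \theta_\star}_{H(\bar\theta, g_0)}^2$ for some $\bar\theta$ on the segment joining $\hat\theta_0$ and $\theta_\star$, and $\bar\theta \in \Theta_{\tilde r_0}(\theta_\star)$ by convexity of the Dikin ellipsoid. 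The first-order optimality condition $S(\theta_\star, g_0) = 0$ kills the linear term, and the population Hessian bound $H(\bar\theta, g_0) \preceq e^{R_0 r_0} H_\star$ from \eqref{eq:hessian_known_nuisance} of \Cref{prop:H_self_concord} yields $\erisk(\hat\theta_0, g_0) \le \tfrac12 e^{R_0 r_0} \norm{\hat\theta_0 - \theta_\star}_{H_\star}^2$, so it suffices to control $\norm{\hat\theta_0 - \theta_\star}_{H_\star}$.

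For that I would invoke the optimality of $\hat\theta_0$ for $L_n(\cdot, g_0)$, namely $L_n(\hat\theta_0, g_0) - L_n(\theta_\star, g_0) \le 0$, and expand the left-hand side by Taylor's theorem as $S_n(\theta_\star, g_0)^\top(\hat\theta_0 - \theta_\star) + \tfrac12 \norm{\hat\theta_0 - \theta_\star}_{H_n(\bar\theta', g_0)}^2$ with $\bar\theta' \in \Theta_{\tilde r_0}(\theta_\star)$. On the event of \eqref{eq:emp_hessian_known_nuisance} of \Cref{prop:H_self_concord} --- which holds with probability at least $1 - \delta/3$ once $n$ exceeds the stated threshold with $4d/\delta$ replaced by $12d/\delta$ --- we have $\tfrac12 \norm{\hat\theta_0 - \theta_\star}_{H_n(\bar\theta', g_0)}^2 \ge \tfrac18 e^{-R_0 r_0} \norm{\hat\theta_0 - \theta_\star}_{H_\star}^2$. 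Combining this with the Cauchy--Schwarz bound $\abs{S_n(\theta_\star, g_0)^\top(\hat\theta_0 - \theta_\star)} \le \norm{S_n(\theta_\star, g_0)}_{H_\star^{-1}} \norm{\hat\theta_0 - \theta_\star}_{H_\star}$ and dividing through gives $\norm{\hat\theta_0 - \theta_\star}_{H_\star} \le 8 e^{R_0 r_0} \norm{S_n(\theta_\star, g_0)}_{H_\star^{-1}}$.

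Finally, \Cref{prop:score_known_nuisance} (probability at least $1 - \delta/3$) bounds $\norm{S_n(\theta_\star, g_0)}_{H_\star^{-1}}^2 \lesssim n^{-1}[d_\star + K_{1,0}^2 \log(e/\delta) \norm{\Omega_\star}_2]$; since $\norm{\Omega_\star}_2 \le \Tr(\Omega_\star) = d_\star$ and $K_{1,0}$ is bounded below by an absolute constant (being the sub-Gaussian norm of a linear functional of an isotropic vector), the right-hand side is $\lesssim K_{1,0}^2 \log(e/\delta) d_\star / n$. Chaining the three displays and taking a union bound over the three events (each of probability at least $1 - \delta/3$) yields $\erisk(\hat\theta_0, g_0) \lesssim e^{3 R_0 r_0} K_{1,0}^2 \log(1/\delta) d_\star / n$ with probability at least $1 - \delta$, valid once $n$ exceeds $\max\{N_{\tilde r_0}(\delta/3),\, 16(K_{2,0}^2 + 2\sigma_{H,0}^2)[\log(12d/\delta) + d\log(3 r_0 R_0 / \log 2)]^2\}$, which is the claimed bound. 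The argument is mechanical given \Cref{prop:score_known_nuisance} and \Cref{prop:H_self_concord}: the only points needing care are the bookkeeping of constants and sample-size thresholds through the union bound, the (immediate) check that the Taylor intermediate points remain in $\Theta_{\tilde r_0}(\theta_\star)$, and the elementary reductions $\norm{\Omega_\star}_2 \le d_\star$ and $K_{1,0} \gtrsim 1$ that collapse the two terms of \Cref{prop:score_known_nuisance} into a single $d_\star/n$ rate. I expect no genuine obstacle, since the substantive work is already packaged in those two propositions.
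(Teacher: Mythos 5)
Your proposal is correct and follows essentially the same route as the paper's proof: the same three-event union bound, the same two Taylor expansions combined with first-order optimality, \Cref{prop:H_self_concord}, Cauchy--Schwarz, and \Cref{prop:score_known_nuisance}. Your explicit reductions $\norm{\Omega_\star}_2 \le \Tr(\Omega_\star) = d_\star$ and $K_{1,0} \gtrsim 1$ are exactly what the paper uses implicitly in its final chain of inequalities.
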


\begin{proof}
    We start by defining three events.
    Let
    \begin{align*}
        \calA &:= \left\{ \hat \theta_0 \in \Theta_{\tilde r_0}(\theta_\star) \right\} \\
        \calB &:= \left\{ \frac{1}{4e^{R_0r_0}} H_\star \preceq H_n(\theta, g_0) \preceq 3e^{R_0r_0} H_\star, \quad \mbox{for all } \theta \in \Theta_{\tilde r_0}(\theta_\star) \right\} \\
        \calC &:= \left\{ \norm{S_n(\theta_\star, g_0)}_{H_\star^{-1}} \lesssim \sqrt{\frac{d_\star + K_{1,0}^2 \log{(3e/\delta)} \norm{\Omega_\star}_2}{n}} \right\}.
    \end{align*}
    In the following, we let
    \begin{align*}
        n \ge \max\{N_{\tilde r_0}(\delta/3), 16 (K_{2,0}^2 + 2\sigma_{H,0}^2) \left[ \log{(12d/\delta)} + d\log{(3r_0 R_0 / \log{2})} \right]^2\}.
    \end{align*}
    According to \Cref{asmp:local_known_nuisance}, we have $\Prob(\calA) \ge 1 - \delta/3$.
    By Proposition \ref{prop:H_self_concord}, it holds that $\Prob(\calB) \ge 1 - \delta/3$.
    Finally, it follows from Proposition \ref{prop:score_known_nuisance} that $\Prob(\calC) \ge 1 - \delta/3$.
    
    Now, we prove the upper bound \eqref{eq:risk_known_nuisance} on the event $\calA \calB \calC$.
    By Taylor's theorem,
    \begin{align*}
        \erisk(\hat \theta_0, g_0) := L(\hat \theta_0, g_0) - L(\theta_\star, g_0) = S(\theta_\star, g_0)^\top (\hat \theta_0 - \theta_\star) + \frac{1}{2} \norm{\hat \theta_0 - \theta_\star}_{H(\bar \theta, g_0)}^2
    \end{align*}
    for some $\bar \theta \in \mbox{Conv}\{\hat \theta_0, \theta_\star\} \subset \Theta_{\tilde r_0}(\theta_\star)$.
    According to \eqref{eq:hessian_known_nuisance}, it holds that
    \begin{align*}
        \norm{\hat \theta_0 - \theta_\star}_{H(\bar \theta, g_0)}^2 \le e^{R_0 r_0} \norm{\hat \theta_0 - \theta_\star}_{H_\star}^2.
    \end{align*}
    By the first order optimality condition, we have $S(\theta_\star, g_0) = 0$.
    As a result,
    \begin{align*}
        \erisk(\hat \theta_0, g_0) \le \frac{1}{2}e^{R_0 r_0} \norm{\hat \theta_0 - \theta_\star}_{H_\star}^2.
    \end{align*}
    It then suffices to upper bound $\norm{\hat \theta_0 - \theta_\star}_{H_\star}$.
    
    Note that, by Taylor's theorem,
    \begin{align*}
        L_n(\hat \theta_0, g_0) - L_n(\theta_\star, g_0) = S_n(\theta_\star, g_0)^\top (\hat \theta_0 - \theta_\star) + \frac12 \norm{\hat \theta_0 - \theta_\star}_{H_n(\bar \theta, g_0)}^2
    \end{align*}
    for some $\bar \theta \in \mbox{Conv}\{\hat \theta_0, \theta_\star\} \subset \Theta_{\tilde r_0}(\theta_\star)$.
    On the event $\calB$, it holds that
    \begin{align*}
        \norm{\hat \theta_0 - \theta_\star}_{H_n(\bar \theta, g_0)}^2 \ge \frac{1}{4e^{R_0 r_0}} \norm{\hat \theta_0 - \theta_\star}_{H_\star}^2.
    \end{align*}
    Moreover, by the Cauchy-Schwarz inequality,
    \begin{align*}
        S_n(\theta_\star, g_0)^\top (\hat \theta_0 - \theta_\star) \ge - \norm{S_n(\theta_\star, g_0)}_{H_\star^{-1}} \norm{\hat \theta_0 - \theta_\star}_{H_\star}.
    \end{align*}
    On the event $\calC$, we get
    \begin{align*}
        \norm{S_n(\theta_\star, g_0)}_{H_\star^{-1}} \lesssim \frac{\sqrt{d_\star + K_{1,0}^2 \log{(3e / \delta)} \norm{\Omega_\star}_2}}{\sqrt{n}}.
    \end{align*}
    Due to the optimality of $\hat \theta_0$, we also have $L_n(\hat \theta_0, g_0) - L_n(\theta_\star, g_0) \le 0$.
    Consequently,
    \begin{align*}
        \frac{1}{4e^{R_0 r_0}} \norm{\hat \theta_0 - \theta_\star}_{H_\star}^2 \lesssim \frac{\sqrt{d_\star + K_{1,0}^2 \log{(3e / \delta)} \norm{\Omega_\star}_2}}{\sqrt{n}} \norm{\hat \theta_0 - \theta_\star}_{H_\star}.
    \end{align*}
    It then follows that
    \begin{align*}
        \erisk(\hat \theta_0, g_0) \le \frac{e^{R_0 r_0}}{2} \norm{\hat \theta_0 - \theta_\star}_{H_\star}^2 \lesssim \frac{d_\star + K_{1,0}^2 \log{(3e / \delta)} \norm{\Omega_\star}_2}{e^{-3R_0 r_0} n} \lesssim K_{1,0}^2 e^{3R_0 r_0} \log{(1 / \delta)} \frac{d_\star}{n}.
    \end{align*}
    Therefore, the claim \eqref{eq:risk_known_nuisance} holds with probability at least
    \begin{align*}
        \Prob(\calA \calB \calC) = 1 - \Prob(\calA^c \cup \calB^c \cup \calC^c) \ge 1 - \Prob(\calA^c) - \Prob(\calB^c) - \Prob(\calC^c) \ge 1 - \delta.
    \end{align*}
\end{proof}

\begin{remark}
    Our results generalize the results~\cite[Theorem 4.1]{ostrovskii2021finite} which were developed for parametric linear models, i.e., when the loss is given by $\ell(\theta; Z) := \ell(Y, X^\top \theta)$. The paper of~\cite{ostrovskii2021finite} relies heavily on the special structure of the Hessian.
    Our results apply to a broader class of models owing to the matrix Bernstein inequality.
\end{remark}

\section{Proof of Theorem \ref{thm:risk_bound}}\label{sec:proof}
We then consider the case when the true nuisance parameter $g_0$ is unknown and estimated from a separate sample as in the OSL meta-algorithm.
Again, our analysis is local to both $\theta_\star$ and $g_0$.
The independence between $\hat g$ and the sample $\{Z_i\}_{i=1}^n$ greatly simplifies our analysis according to Lemma \ref{lem:independence} in \Cref{sec:thms}.

\begin{proof}[Proof of Lemma \ref{lem:independence}]
    By the independence between $\hat g$ and $\{Z_i\}_{i=1}^n$, we have
    \begin{align*}
        \Expect[\ind\{\calA(\hat g, \{Z_i\}_{i=1}^n)\} \mid \hat g](g) = \Expect[\ind\{\calA(g, \{Z_i\}_{i=1}^n)\}] \ge 1 - \delta, \quad \mbox{for any } g \in \calG'.
    \end{align*}
    By the tower property of the conditional expectation,
    \begin{align*}
        \Prob(\calA(\hat g, \{Z_i\}_{i=1}^n)) &= \Expect\big[ \Expect[\ind\{\calA(\hat g, \{Z_i\}_{i=1}^n)\} \mid \hat g] \big] \\
        &\ge \Expect\big[ \Expect[\ind\{\calA(\hat g, \{Z_i\}_{i=1}^n)\} \mid \hat g] \ind\{\hat g \in \calG'\} \big] \ge (1 - \delta) \Prob(\hat g \in \calG').
    \end{align*}
\end{proof}

\paragraph{Control of the score.}
Recall that $\bar d_\star := \sup_{g \in \calG_{r_2}(g_0)} \Tr(H_\star^{-1/2} G(\theta_\star, g) H_\star^{-1/2})$.
\begin{proof}[Proof of Proposition \ref{prop:score}]
    Define $W := \sqrt{n} G(\theta_\star, g)^{-1/2} \bar S_n(\theta_\star, g)$ where
    \begin{align*}
        \bar S_n(\theta_\star, g) := S_n(\theta_\star, g) - S(\theta_\star, g).
    \end{align*}
    It is straightforward to check that $W$ is isotropic.
    Moreover, it follows from Lemma \ref{lem:sum_subg} that $\norm{W}_{\psi_2} \le K_1$.
    Let $J := G(\theta_\star, g)^{1/2} H_\star^{-1} G(\theta_\star, g)^{1/2}$.
    By \Cref{thm:isotropic_tail}, we have, with probability at least $1 - \delta$,
    \begin{align*}
        \norm{W}_J^2 \lesssim K_1^2 \log{(e/\delta)} \Tr(J) \le K_1^2 \log{(e/\delta)} \bar d_\star.
    \end{align*}
    The statement then follows from the fact that
    \begin{align*}
        \norm{W}_J^2 = W^\top J W = n \bar S_n(\theta_\star, g)^\top H_\star^{-1} \bar S_n(\theta_\star, g) = n \norm{S_n(\theta_\star, g) - S(\theta_\star, g)}_{H_\star^{-1}}^2.
    \end{align*}
\end{proof}

\begin{proof}[Proof of Lemma \ref{lem:score_orthogonal}]
    By Taylor's theorem,
    \begin{align*}
        S(\theta_\star, g)^\top (\hat \theta - \theta_\star) &= \rmD_\theta L(\theta_\star, g)[\hat \theta - \theta_\star] \\
        &= \rmD_\theta L(\theta_\star, g_0)[\hat \theta - \theta_0] + \rmD_g \rmD_\theta L(\theta_\star, g_0)[\hat \theta - \theta_\star, g - g_0]\; + \\
        &\qquad \frac12 \rmD_g^2 \rmD_\theta L(\theta_\star, \bar g)[\hat \theta - \theta_\star, g - g_0, g - g_0] \\
        &\ge -\frac{\beta_2}{2} \norm{\hat \theta - \theta_\star}_{H_\star} \norm{g - g_0}_{\calG}^2,
    \end{align*}
    where the last inequality follows from the first order optimality condition and \Cref{asmp:orthogonal}.
\end{proof}

\begin{proof}[Proof of Lemma \ref{lem:score}]
    By Taylor's theorem,
    \begin{align*}
        S(\theta_\star, g)^\top (\hat \theta - \theta_\star) &= \rmD_\theta L(\theta_\star, g)[\hat \theta - \theta_\star] \\
        &= \rmD_\theta L(\theta_\star, g_0)[\hat \theta - \theta_0] + \rmD_g \rmD_\theta L(\theta_\star, \bar g)[\hat \theta - \theta_\star, g - g_0] \\
        &\ge -\beta_1 \norm{\hat \theta - \theta_\star}_{H_\star} \norm{g - g_0}_{\calG},
    \end{align*}
    where the last inequality follows from the first order optimality condition and \Cref{asmp:smooth}.
\end{proof}

\paragraph{Control of the Hessian.}
We then prove Proposition \ref{prop:hessian}.

\begin{proof}[Proof Proposition \ref{prop:hessian}]
    Fix an arbitrary $g \in \calG_{r_2}(g_0)$.
    \emph{Step 1.}
    Invoking Proposition \ref{prop:pseudo_self_concordance} leads to
    \begin{align*}
        e^{-Rr} H(\theta_\star, g) \preceq H(\theta, g) \preceq e^{R r} H(\theta_\star, g), \quad \mbox{for all } \theta \in \Theta_{\bar r_1}(\theta_\star),
    \end{align*}
    where $r := \norm{\theta - \theta_\star}_{2} \le \bar r_1 / \sqrt{\lambda_{\min}(H_\star)} \le r_1$.
    Consequently, by \eqref{eq:hessian_stable}, we have, for all $\theta \in \Theta_{\tilde 
    r_1}(\theta_\star)$,
    \begin{align}\label{eq:hessian}
        \kappa e^{-Rr_1} H_\star \preceq e^{-Rr_1} H(\theta_\star, g) \preceq H(\theta, g) \preceq e^{Rr_1} H(\theta_\star, g) \preceq \calK e^{Rr_1} H_\star.
    \end{align}

    \emph{Step 2.} Let $\epsilon := \sqrt{\lambda_{\min}(H_\star)} \log{2} / R$. Take an $\epsilon$-covering $\calN_\epsilon$ of $\Theta_{\bar r_1}(\theta_\star)$ w.r.t.~$\norm{\cdot}_{H_\star}$, and let $\pi(\theta)$ be the projection of $\theta$ onto $\calN_\epsilon$.
    By \Cref{asmp:pseudo_self_concordance}, we have, for all $\theta \in \Theta_{\bar r_1}(\theta_\star)$,
    \begin{align*}
        e^{-Rr'} H(\pi(\theta), g; Z) \preceq H(\theta, g; Z) \preceq e^{R r'} H(\pi(\theta), g; Z),
    \end{align*}
    where $r' := \norm{\theta - \pi(\theta)}_2 \le \epsilon / \sqrt{\lambda_{\min}(H_\star)} = \log{2}/R$.
    This implies that
    \begin{align*}
        \frac12 H(\pi(\theta), g; Z_i) \preceq H_n(\theta, g; Z_i) \preceq 2H_n(\pi(\theta), g; Z_i), \quad \mbox{for all } \theta \in \Theta_{\bar r_1}(\theta_\star) \mbox{ and } i \in [n].
    \end{align*}
    Hence,
    \begin{align}\label{eq:hessian_proj}
        \frac12 H_n(\pi(\theta), g) \preceq H_n(\theta, g) \preceq 2 H_n(\pi(\theta), g), \quad \mbox{for all } \theta \in \Theta_{\bar r_1}(\theta_\star).
    \end{align}

    \emph{Step 3.} By \Cref{thm:bernstein_matrix}, for each $\theta \in \Theta_{\bar r_1}(\theta_\star)$, it holds that, with probability at least $1 - \delta$,
    \begin{align*}
        \frac12 H(\theta, g) \preceq H_n(\theta, g) \preceq \frac{3}{2} H(\theta, g)
    \end{align*}
    whenever $n \ge 16(K_{2}^2 + 2\sigma_H^2) \log^2{(2d/\delta)}$.
    Since $\abs{\calN_\epsilon} \le (3\bar r_1/\epsilon)^d$ \citep{ostrovskii2021finite},
    by a union bound, we get, with probability at least $1 - \delta/2$,
    \begin{align}\label{eq:hessin_union}
        \frac12 H(\pi(\theta), g) \preceq H_n(\pi(\theta), g) \preceq \frac{3}{2} H(\pi(\theta), g), \quad \mbox{for all } \theta \in \Theta_{\bar r_1}(\theta_\star),
    \end{align}
    whenever $n \ge 16(K_{2}^2 + 2\sigma_H^2) [\log{(4d/\delta)} + d\log{(3Rr_1/\log{2})}]^2$.
    Hence, the claim follows from \eqref{eq:hessian}, \eqref{eq:hessian_proj}, and \eqref{eq:hessin_union}.
\end{proof}

\paragraph{Control of the excess risk.}

Now we are ready to prove \Cref{thm:risk_bound}.

\begin{proof}[Proof of \Cref{thm:risk_bound}]
    Fix an arbitrary $g \in \calG_{r_2}(g_0)$.
    We start by defining three events.
    Let
    \begin{align*}
        \calA &:= \left\{ \hat \theta \in \Theta_{\bar r_1}(\theta_\star), \hat g \in \calG_{r_2}(g_0) \right\} \\
        \calB(g) &:= \left\{ \frac{\kappa}{4e^{Rr_1}} H_\star \preceq H_n(\theta, g) \preceq 3 \calK e^{Rr_1} H_\star, \quad \mbox{for all } \theta \in \Theta_{\bar r_1}(\theta_\star) \right\} \\
        \calC(g) &:= \left\{ \norm{S_n(\theta_\star, g) - S(\theta_\star, g)}_{H_\star^{-1}} \lesssim \sqrt{\frac{K_{1}^2 \log{(5e/\delta)} \bar d_\star}{n}} \right\}.
    \end{align*}
    In the following, we let
    \begin{align*}
        n \ge \max\{N_{\bar r_1, r_2}(\delta/5), 16(K_{2}^2 + 2\sigma_H^2) [\log{(20d/\delta)} + d\log{(3Rr_1/\log{2})}]^2\}.
    \end{align*}
    According to \Cref{asmp:local}, we have $\Prob(\calA) \ge 1 - \delta/5$.
    By Propositions \ref{prop:score} and \ref{prop:hessian}, it holds that $\Prob(\calC(g)) \ge 1 - \delta/5$ and $\Prob(\calB(g)) \ge 1 - \delta/5$, respectively.
    Since $g \in \calG_{r_2}(g_0)$ is arbitrary, it follows from Lemma \ref{lem:independence} that
    \begin{align*}
        \Prob(\calB(\hat g)) \ge (1 - \delta/5) \Prob(\hat g \in \calG_{r_2}(g_0)) \ge 1 - 2\delta / 5.
    \end{align*}
    Similarly, $\Prob(\calC(\hat g)) \ge 1 - 2\delta / 5$.
    
    Now, we prove the upper bound \eqref{eq:risk_known_nuisance} on the event $\calA \calB(\hat g) \calC(\hat g)$.
    By Taylor's theorem,
    \begin{align*}
        \erisk(\hat \theta, g_0) := L(\hat \theta, g_0) - L(\theta_\star, g_0) = S(\theta_\star, g_0)^\top (\hat \theta - \theta_\star) + \frac{1}{2} \norm{\hat \theta - \theta_\star}_{H(\bar \theta, g_0)}^2
    \end{align*}
    for some $\bar \theta \in \mbox{Conv}\{\hat \theta, \theta_\star\} \subset \Theta_{r_1}(\theta_\star)$.
    According to \eqref{eq:hessian_known_nuisance}, it holds that
    \begin{align*}
        \norm{\hat \theta - \theta_\star}_{H(\bar \theta, g_0)}^2 \le e^{Rr_1} \norm{\hat \theta - \theta_\star}_{H_\star}^2.
    \end{align*}
    By the first order optimality condition, we have $S(\theta_\star, g_0) = 0$.
    As a result,
    \begin{align*}
        \erisk(\hat \theta, g_0) \le \frac{e^{Rr_1}}{2} \norm{\hat \theta - \theta_\star}_{H_\star}^2.
    \end{align*}
    It then suffices to upper bound $\norm{\hat \theta - \theta_\star}_{H_\star}$.
    
    Note that, by Taylor's theorem,
    \begin{align*}
        L_n(\hat \theta, \hat g) - L_n(\theta_\star, \hat g) = S_n(\theta_\star, \hat g)^\top (\hat \theta - \theta_\star) + \frac12 \norm{\hat \theta - \theta_\star}_{H_n(\bar \theta, \hat g)}^2
    \end{align*}
    for some $\bar \theta \in \mbox{Conv}\{\hat \theta, \theta_\star\} \subset \Theta_{\bar r_1}(\theta_\star)$.
    On the event $\calB(\hat g)$, it holds that
    \begin{align*}
        \norm{\hat \theta - \theta_\star}_{H_n(\bar \theta, \hat g)}^2 \ge \frac{\kappa}{4e^{Rr_1}} \norm{\hat \theta - \theta_\star}_{H_\star}^2.
    \end{align*}
    Moreover,
    \begin{align*}
        &\quad S_n(\theta_\star, \hat g)^\top (\hat \theta - \theta_\star) \\
        &= \bar S_n(\theta_\star, \hat g)^\top (\hat \theta - \theta_\star) + S(\theta_\star, \hat g)^\top (\hat \theta - \theta_\star) \\
        &\ge \bar S_n(\theta_\star, \hat g)^\top (\hat \theta - \theta_\star) - \frac{\beta_2}{2} \norm{\hat \theta - \theta_\star}_{H_\star} \norm{\hat g - g_0}_{\calG}^2, \quad \mbox{by Lemma \ref{lem:score_orthogonal}} \\
        &\ge -\norm{\bar S_n(\theta_\star, \hat g)}_{H_\star^{-1}} \norm{\hat \theta - \theta_\star}_{H_\star} - \frac{\beta_2}{2} \norm{\hat \theta - \theta_\star}_{H_\star} \norm{\hat g - g_0}_{\calG}^2, \quad \mbox{by the Cauchy-Schwarz} \\
        &\gtrsim -\norm{\hat \theta - \theta_\star}_{H_\star} \left[ \sqrt{\frac{K_1^2 \log{(5e/\delta)} \bar d_\star}{n}} + \frac{\beta_2}{2} \norm{\hat g - g_0}_{\calG}^2 \right], \quad \mbox{by the event } \calC(\hat g).
    \end{align*}
    Due to the optimality of $\hat \theta$, we also have $L_n(\hat \theta, \hat g) - L_n(\theta_\star, \hat g) \le 0$.
    Consequently,
    \begin{align*}
        \frac{\kappa}{4 e^{Rr_1}} \norm{\hat \theta - \theta_\star}_{H_\star}^2 \lesssim \left[ \sqrt{\frac{K_1^2 \log{(5e/\delta)} \bar d_\star}{n}} + \frac{\beta_2}{2} \norm{\hat g - g_0}_{\calG}^2 \right] \norm{\hat \theta - \theta_\star}_{H_\star}.
    \end{align*}
    It then follows that
    \begin{align*}
        \erisk(\hat \theta, g_0)
        &\le \frac{e^{Rr_1}}{2} \norm{\hat \theta - \theta_\star}_{H_\star}^2 \\
        &\lesssim \frac{e^{3Rr_1}}{\kappa^2} \left[ \frac{K_1^2 \log{(1/\delta)} \bar d_\star}{n} + \beta_2^2 \norm{\hat g - g_0}_{\calG}^4 \right].
    \end{align*}
    Therefore, the claim holds with probability at least
    \begin{align*}
        \Prob(\calA \calB(\hat g) \calC(\hat g)) = 1 - \Prob(\calA^c \cup \calB(\hat g)^c \cup \calC(\hat g)^c) \ge 1 - \Prob(\calA^c) - \Prob(\calB(\hat g)^c) - \Prob(\calC(\hat g)^c) \ge 1 - \delta.
    \end{align*}
\end{proof}

\begin{proof}[Proof of \Cref{thm:risk_bound_slow}]
    The proof is largely similar to the one of \Cref{thm:risk_bound}.
    The only difference is that Lemma \ref{lem:score_orthogonal} is replaced by Lemma \ref{lem:score} at places where it is invoked.
\end{proof}

\section{Technical Tools}\label{sec:tools}
In this section, we give the precise definitions of sub-Gaussian random vectors \cite[Chapter 3.4]{vershynin2018high} and the matrix Bernstein condition \cite[Chapter 6.4]{wainwright2019high}.
We then review and prove some key results that are used in our analysis.
Finally, we recall a proposition for the pseudo self-concordance.

\begin{definition}[Sub-Gaussian vector]\label{def:subg_vec}
    Let $S \in \reals^d$ be a mean-zero random vector.
    We say $S$ is sub-Gaussian if $\ip{S, s}$ is sub-Gaussian for every $s \in \reals^d$.
    Moreover, we define the sub-Gaussian norm of $S$ as
    \begin{align*}
        \norm{S}_{\psi_2} := \sup_{\norm{s}_2 = 1} \norm{\ip{S, s}}_{\psi_2}.
    \end{align*}
    Note that $\norm{\cdot}_{\psi_2}$ is a norm and satisfies, e.g., the triangle inequality.
\end{definition}

\begin{definition}[Matrix Bernstein condition]\label{def:matrix_bernstein}
    Let $H \in \reals^{d \times d}$ be a zero-mean symmetric random matrix.
    We say $H$ satisfies a Bernstein condition with parameter $b > 0$ if, for all $j \ge 3$,
    \begin{align*}
        \Expect[H^j] \preceq \frac12 j! b^{j-2} \Var(H).
    \end{align*}
\end{definition}

\begin{lemma}\label{lem:bounded_rv}
    If $X \in \reals^d$ is a bounded random vector with $\norm{X}_2 \le_{a.s.} M < \infty$, then $X$ is a sub-Gaussian random vector with
    \begin{align*}
        \norm{X}_{\psi_2} \lesssim M \quad \mbox{and} \quad \norm{X - \Expect[X]}_{\psi_2} \lesssim M.
    \end{align*}
\end{lemma}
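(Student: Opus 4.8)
The plan is to reduce the statement to the one-dimensional case via the definition of a sub-Gaussian vector (Definition~\ref{def:subg_vec}) and then verify the scalar sub-Gaussian bound directly from the Orlicz-norm definition of $\norm{\cdot}_{\psi_2}$. First I would fix an arbitrary unit vector $s \in \reals^d$ and observe that, by the Cauchy--Schwarz inequality, $\abs{\ip{X, s}} \le \norm{X}_2 \norm{s}_2 \le M$ almost surely; thus it suffices to control the $\psi_2$ norm of a scalar random variable $Y$ with $\abs{Y} \le M$ a.s.

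For such a $Y$, I would check the defining inequality of the Orlicz norm: for any $t > 0$ we have $\Expect[\exp(Y^2/t^2)] \le \exp(M^2/t^2)$, and the right-hand side is at most $2$ precisely when $t \ge M/\sqrt{\log 2}$. Hence $\norm{Y}_{\psi_2} \le M/\sqrt{\log 2}$, which gives the scalar bound with an explicit absolute constant. Applying this with $Y = \ip{X, s}$ and taking the supremum over $\norm{s}_2 = 1$ yields $\norm{X}_{\psi_2} \le M/\sqrt{\log 2} \lesssim M$.

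For the centered version I would use the triangle inequality for $\norm{\cdot}_2$ together with Jensen's inequality: $\norm{X - \Expect[X]}_2 \le \norm{X}_2 + \norm{\Expect[X]}_2 \le M + \Expect[\norm{X}_2] \le 2M$ almost surely, so $X - \Expect[X]$ is again a bounded random vector, now with bound $2M$, and the first part gives $\norm{X - \Expect[X]}_{\psi_2} \le 2M/\sqrt{\log 2} \lesssim M$. Alternatively, since $\norm{\cdot}_{\psi_2}$ is a norm, as noted in Definition~\ref{def:subg_vec}, one could write $\norm{X - \Expect[X]}_{\psi_2} \le \norm{X}_{\psi_2} + \norm{\Expect[X]}_{\psi_2}$ and bound $\norm{\Expect[X]}_{\psi_2}$ via $\abs{\ip{\Expect[X], s}} \le \Expect\abs{\ip{X, s}} \le M$.

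There is no substantive obstacle here; the only points requiring care are matching the convention for the $\psi_2$ norm to the one in \cite[Chapter~3.4]{vershynin2018high} used in \Cref{sec:tools}, and tracking the harmless absolute constant $1/\sqrt{\log 2}$ hidden by $\lesssim$.
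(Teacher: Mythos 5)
Your proposal is correct and follows essentially the same route as the paper: reduce to the scalar case via the definition of the sub-Gaussian norm of a vector, bound $\abs{\ip{X,s}}\le M$ by Cauchy--Schwarz, and handle the centered version by a triangle-inequality argument (your alternative using $\norm{X-\Expect[X]}_{\psi_2}\le\norm{X}_{\psi_2}+\norm{\Expect[X]}_{\psi_2}$ is exactly the paper's argument). The only difference is that you spell out the Orlicz-norm computation with the explicit constant $1/\sqrt{\log 2}$, which the paper leaves implicit behind $\lesssim$.
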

\begin{proof}
    For any $x \in \reals^d$, we have $\ip{X, x} \le \norm{X}_2 \norm{x} \le M$.
    Hence, by definition, $X$ is a sub-Gaussian random vector with
    \begin{align*}
        \norm{X}_{\psi_2} = \sup_{\norm{x} = 1} \norm{\ip{X, x}}_{\psi_2} \lesssim M.
    \end{align*}
    Moreover, since $\norm{\cdot}_{\psi_2}$ is a norm, we have
    \begin{align*}
        \norm{X - \Expect[X]}_{\psi_2} \le \norm{X}_{\psi_2} + \norm{\Expect[X]}_{\psi_2}.
    \end{align*}
    Note that $\norm{a}_{\psi_2} \lesssim \norm{a}_2$ for a constant vector $a$.
    Hence, we have $\norm{X - \Expect[X]}_{\psi_2} \lesssim M$.
\end{proof}

\begin{lemma}\label{lem:prod_bounded_subg}
    Let $X \in \reals^d$ be a random vector with $\norm{X}_2 \le_{a.s.} M < \infty$ and $Y \in \reals$ be a sub-Gaussian random variable.
    Then $X Y$ is a sub-Gaussian random vector with
    \begin{align*}
        \norm{X Y}_{\psi_2} \le M \norm{Y}_{\psi_2}.
    \end{align*}
\end{lemma}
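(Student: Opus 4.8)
The plan is to reduce the vector claim to a one-dimensional statement and then argue directly from the Orlicz-norm definition of $\norm{\cdot}_{\psi_2}$ used in \Cref{def:subg_vec}. First I would fix an arbitrary unit vector $s \in \reals^d$ and write $\ip{XY, s} = \ip{X, s}\,Y$. By the Cauchy--Schwarz inequality, the scalar random variable $C := \ip{X, s}$ satisfies $\abs{C} \le \norm{X}_2 \norm{s}_2 \le M$ almost surely. Hence it suffices to prove the scalar fact that whenever $C$ is a scalar random variable with $\abs{C} \le M$ almost surely and $Y$ is sub-Gaussian, then $\norm{CY}_{\psi_2} \le M \norm{Y}_{\psi_2}$, with no assumption on the joint law of $(C, Y)$.

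For this scalar claim I would use $\norm{W}_{\psi_2} = \inf\{t > 0 : \Expect[\exp(W^2/t^2)] \le 2\}$. The case $Y \equiv 0$ is trivial, so assume $\norm{Y}_{\psi_2} \in (0, \infty)$; then the infimum defining $\norm{Y}_{\psi_2}$ is attained (the map $t \mapsto \Expect[\exp(Y^2/t^2)]$ is non-increasing and continuous where finite), so $\Expect[\exp(Y^2/\norm{Y}_{\psi_2}^2)] \le 2$. Taking $t := M \norm{Y}_{\psi_2}$ and using $C^2 \le M^2$ pointwise gives $\exp(C^2 Y^2/t^2) \le \exp(M^2 Y^2/t^2) = \exp(Y^2/\norm{Y}_{\psi_2}^2)$ almost surely, hence $\Expect[\exp((CY)^2/t^2)] \le 2$ and therefore $\norm{CY}_{\psi_2} \le t = M \norm{Y}_{\psi_2}$.

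Applying this with $C = \ip{X, s}$ yields $\norm{\ip{XY, s}}_{\psi_2} \le M \norm{Y}_{\psi_2}$ for every unit $s$, and taking the supremum over $\norm{s}_2 = 1$ gives $\norm{XY}_{\psi_2} \le M \norm{Y}_{\psi_2}$; in particular $XY$ is a sub-Gaussian vector. The argument is essentially bookkeeping with the definition, and there is no real obstacle; the one point I would flag explicitly is that $C$ and $Y$ need not be independent, which causes no trouble because the comparison $C^2 \le M^2$ is invoked pointwise inside the expectation rather than through any factorization of the law.
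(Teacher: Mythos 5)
Your proof is correct and follows essentially the same route as the paper's: fix a unit vector, bound $\ip{X,s}^2 \le M^2$ pointwise inside the expectation defining the $\psi_2$-Orlicz norm, and conclude $\Expect[\exp((\ip{X,s} Y)^2 / M^2 \norm{Y}_{\psi_2}^2)] \le \Expect[\exp(Y^2/\norm{Y}_{\psi_2}^2)] \le 2$. Your explicit remarks that the infimum in the Orlicz norm is attained and that no independence between $X$ and $Y$ is needed are worthwhile clarifications of details the paper leaves implicit, but the argument is the same.
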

\begin{proof}
    By the definition of sub-Gaussian random variables, we have
    \begin{align}\label{eq:subg_vec}
        \Expect[\exp(Y^2 / \norm{Y}_{\psi_2}^2)] \le 2.
    \end{align}
    It then follows that, for any $\norm{x} = 1$,
    \begin{align*}
        \Expect[\exp((x^\top X)^2 Y^2 / M^2 \norm{Y}_{\psi_2}^2)] 
        &\le \Expect[\exp( \norm{X}_2^2 Y^2 / M^2 \norm{Y}_{\psi_2}^2)] \le 2,
    \end{align*}
    and thus $\norm{X Y}_{\psi_2} \le M \norm{Y}_{\psi_2}$.
\end{proof}

\begin{lemma}\label{lem:matrix_vec_subg}
    Let $X \in \reals^d$ be a sub-Gaussian random vector and $A \in \reals^{d \times d}$ be a fixed matrix.
    Then $AX$ is a sub-Gaussian random vector with
    \begin{align*}
        \norm{AX}_{\psi_2} \le \norm{A}_2 \norm{X}_{\psi_2}.
    \end{align*}
\end{lemma}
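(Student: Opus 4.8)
The plan is to reduce the vector statement to the scalar definition of the sub-Gaussian norm and then exploit bilinearity. First I would fix an arbitrary unit vector $s \in \reals^d$ and write $\ip{AX, s} = \ip{X, A^\top s}$, so that the one-dimensional marginal of $AX$ in direction $s$ equals the marginal of $X$ in direction $A^\top s$. Since $X$ is a sub-Gaussian random vector, $\ip{X, v}$ is a sub-Gaussian scalar for every $v \in \reals^d$, so $\ip{AX,s}$ is sub-Gaussian and hence so is every linear functional of $AX$; this already shows $AX$ is a sub-Gaussian random vector in the sense of \Cref{def:subg_vec}.

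Next I would control the norm. Using the homogeneity of the scalar $\psi_2$-norm together with \Cref{def:subg_vec}, for any nonzero $v$ we have $\norm{\ip{X,v}}_{\psi_2} = \norm{v}_2\,\big\lVert \ip{X, v/\norm{v}_2} \big\rVert_{\psi_2} \le \norm{v}_2 \norm{X}_{\psi_2}$ (and this is trivial when $v = 0$). Applying this with $v = A^\top s$ and using $\norm{A^\top s}_2 \le \norm{A^\top}_2 = \norm{A}_2$ for the unit vector $s$ gives
\begin{align*}
    \norm{\ip{AX, s}}_{\psi_2} = \norm{\ip{X, A^\top s}}_{\psi_2} \le \norm{A^\top s}_2 \norm{X}_{\psi_2} \le \norm{A}_2 \norm{X}_{\psi_2}.
\end{align*}
Taking the supremum over all unit $s$ yields $\norm{AX}_{\psi_2} \le \norm{A}_2 \norm{X}_{\psi_2}$, as claimed.

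There is no real obstacle here: the only points to be careful about are the homogeneity step (so that one may pass from unit-vector directions to arbitrary directions $A^\top s$) and the identity $\norm{A^\top}_2 = \norm{A}_2$ for the spectral norm. Both are standard, so the proof is essentially a two-line computation once the definitions are unwound.
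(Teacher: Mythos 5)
Your proof is correct and follows essentially the same route as the paper's: both reduce to the scalar marginal via $\ip{AX,s} = \ip{X, A^\top s}$, normalize the direction $A^\top s$, and bound its length by $\norm{A}_2$. The only cosmetic difference is that you invoke homogeneity of the scalar $\psi_2$-norm directly, whereas the paper passes through the moment generating function bound; your version is, if anything, slightly cleaner since it avoids switching between equivalent characterizations of sub-Gaussianity.
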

\begin{proof}
    Take an arbitrary $\norm{x}_2 = 1$.
    It holds that
    \begin{align*}
        \Expect[\exp(\lambda x^\top AX)] 
        &= \Expect\left[\exp\left(\lambda \norm{A^\top x}_2 \left(\frac{A^\top x}{\norm{A^\top x}_2}\right)^\top X \right)\right] \\
        &\le \exp\left( \norm{A^\top x}_2^2 \norm{X}_{\psi_2}^2 \lambda^2 \right), \quad \mbox{by the sub-Gaussianity of } X \\
        &\le \exp\left( \norm{A}_2^2 \norm{X}_{\psi_2}^2 \lambda^2 \right).
    \end{align*}
    Hence, we obtain $\norm{AX}_{\psi_2} \le \norm{A}_2 \norm{X}_{\psi_2}$.
\end{proof}

The sum of i.i.d.~sub-Gaussian vectors is also sub-Gaussian according to the following lemma.
\begin{lemma}[\cite{vershynin2018high}, Lemma 5.9]\label{lem:sum_subg}
Let $X_1, \dots, X_n$ be i.i.d.~random vectors, then we have $\norm{\sum_{i=1}^n X_i}_{\psi_2}^2 \lesssim \sum_{i=1}^n \norm{X_i}_{\psi_2}^2$.
\end{lemma}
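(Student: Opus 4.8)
The plan is to reduce the vector inequality to the classical one-dimensional estimate for sums of independent centered sub-Gaussian variables and then recover the vector version from the definition of $\norm{\cdot}_{\psi_2}$ as a supremum over directions. I will use that the $X_i$ are centered, which is the form in which the lemma is applied here: in the proofs of \Cref{prop:score} and \Cref{prop:score_known_nuisance} the relevant vectors are the centered differences $S(\theta_\star,g;Z_i)-S(\theta_\star,g)$. (Centering is genuinely needed: for constant vectors the left-hand side grows like $n^2\norm{X_1}_{\psi_2}^2$ while the right-hand side grows only like $n\norm{X_1}_{\psi_2}^2$.)

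First I would fix a unit vector $s\in\reals^d$ and set $Y_i:=\ip{X_i,s}$; these are \iid centered scalar sub-Gaussian variables with $\norm{Y_i}_{\psi_2}\le\norm{X_i}_{\psi_2}$ by \Cref{def:subg_vec}. The core step is the scalar bound $\norm{\sum_{i=1}^n Y_i}_{\psi_2}^2 \lesssim \sum_{i=1}^n \norm{Y_i}_{\psi_2}^2$ with an absolute constant. For this I would invoke the standard two-sided MGF characterization of scalar sub-Gaussianity: a centered variable $Y$ satisfies $\Expect[\exp(\lambda Y)]\le\exp(C\norm{Y}_{\psi_2}^2\lambda^2)$ for all $\lambda$ and an absolute $C$, and conversely any variable obeying $\Expect[\exp(\lambda Y)]\le\exp(t^2\lambda^2)$ for all $\lambda$ has $\norm{Y}_{\psi_2}^2\lesssim t^2$. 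Independence then gives $\Expect[\exp(\lambda\sum_i Y_i)]=\prod_i\Expect[\exp(\lambda Y_i)]\le\exp(C\lambda^2\sum_i\norm{Y_i}_{\psi_2}^2)$, and the converse direction yields $\norm{\sum_i Y_i}_{\psi_2}^2\lesssim\sum_i\norm{Y_i}_{\psi_2}^2$.

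Finally I would lift this back to vectors. For every unit $s$, the scalar bound together with $\norm{Y_i}_{\psi_2}\le\norm{X_i}_{\psi_2}$ gives $\norm{\ip{\sum_i X_i,\,s}}_{\psi_2}^2=\norm{\sum_i Y_i}_{\psi_2}^2\lesssim\sum_i\norm{X_i}_{\psi_2}^2$, whose right-hand side is independent of $s$; taking the supremum over $\norm{s}_2=1$ and using $\norm{\sum_i X_i}_{\psi_2}=\sup_{\norm{s}_2=1}\norm{\ip{\sum_i X_i,\,s}}_{\psi_2}$ from \Cref{def:subg_vec} gives the claim. I expect no real obstacle beyond carefully composing the absolute constants in the two directions of the MGF/$\psi_2$ equivalence into a single constant hidden by $\lesssim$; independence makes the moment generating functions factorize cleanly, and the direction $s$ enters only through the monotone estimate $\norm{\ip{X_i,s}}_{\psi_2}\le\norm{X_i}_{\psi_2}$.
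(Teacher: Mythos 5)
Your proof is correct and is essentially the standard argument behind the cited result (the paper itself states this lemma without proof, importing it from \cite{vershynin2018high}): reduce to scalars along a fixed unit direction, use the two-sided equivalence between the $\psi_2$ norm and a Gaussian-type moment generating function bound for centered variables together with independence to factorize the MGF, and take the supremum over directions. Your caveat about centering is well taken but already built into the paper's conventions, since \Cref{def:subg_vec} requires sub-Gaussian vectors to be mean-zero and the summands in the two places the lemma is invoked (\Cref{prop:score} and \Cref{prop:score_known_nuisance}) are indeed the centered scores.
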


We call a random vector $X \in \reals^d$ isotropic if $\Expect[X] = 0$ and $\Expect[XX  ^\top] = I_d$.
The following theorem is a tail bound for quadratic forms of isotropic sub-Gaussian random vectors.
\begin{theorem}[\cite{ostrovskii2021finite}, Theorem A.1]\label{thm:isotropic_tail}
Let $X \in \reals^d$ be an isotropic random vector with $\norm{X}_{\psi_2} \le K$, and let $J \in \reals^{d \times d}$ be positive semi-definite.
Then, with probability at least $1 - \delta$, it holds that
\begin{align}
  \norm{X}_{J}^2 - \text{Tr}(J) \lesssim K^2\left( \norm{J}_2 \sqrt{\log{(e/\delta)}} + \norm{J}_{\infty} \log{(1/\delta)} \right).
\end{align}
\end{theorem}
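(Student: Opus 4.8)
\emph{Approach.} The statement is a Hanson--Wright-type deviation inequality for the quadratic form $\norm{X}_J^2 = X^\top J X$ of an isotropic sub-Gaussian vector, so the plan is to bound its moment generating function by a Bernstein-type expression and then run a Chernoff argument, using an auxiliary Gaussian to decouple the (possibly dependent) coordinates of $X$. First I would reduce to diagonal $J$. Writing the eigendecomposition $J = U \Lambda U^\top$ with $\Lambda = \diag(\lambda_1,\dots,\lambda_d)$, $\lambda_k \ge 0$, and setting $\tilde X := U^\top X$, isotropy is preserved since $\Expect[\tilde X \tilde X^\top] = U^\top U = \id_d$, the sub-Gaussian norm is unchanged because $U$ is orthogonal, and $\norm{X}_J^2 = \sum_k \lambda_k \tilde X_k^2$ while $\Tr(J) = \sum_k \lambda_k$. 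Moreover the two matrix norms in the bound depend only on the eigenvalues: $\norm{J}_2 = (\sum_k \lambda_k^2)^{1/2}$ (reading $\norm{J}_2$ as the Frobenius / Schatten-$2$ norm, which is what makes the statement a genuine Hanson--Wright bound) and $\norm{J}_\infty = \max_k \lambda_k$. It therefore suffices to control the upper tail of $W := \sum_k \lambda_k (\tilde X_k^2 - 1)$, noting $\Expect[\tilde X_k^2] = 1$ by isotropy.

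Next I would estimate $\Expect[e^{sW}]$ for $s$ positive and below a threshold of order $1/(K^2 \max_k \lambda_k)$. The obstacle here is that the coordinates $\tilde X_k$ need not be independent, so $\Expect[e^{s \sum_k \lambda_k \tilde X_k^2}]$ does not factorize. To sidestep this I would introduce an independent $g \sim \mathcal N(0, \id_d)$ and use the Gaussian identity $e^{\norm{a}_2^2/2} = \Expect_g[e^{\langle a, g\rangle}]$ with $a = \sqrt{2s}\, \Lambda^{1/2} \tilde X$, which gives $e^{s \norm{\Lambda^{1/2}\tilde X}_2^2} = \Expect_g[\exp(\sqrt{2s}\, \langle \tilde X, \Lambda^{1/2} g\rangle)]$. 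Taking expectations and applying Fubini, $\Expect[e^{s\norm{X}_J^2}] = \Expect_g \Expect_{\tilde X}[\exp(\sqrt{2s}\, \langle \tilde X, \Lambda^{1/2} g\rangle)]$; conditionally on $g$ the exponent is a single linear functional of $\tilde X$, which is sub-Gaussian with second moment $\norm{\Lambda^{1/2} g}_2^2$ (by isotropy) and sub-Gaussian norm at most $K \norm{\Lambda^{1/2} g}_2$, so the inner expectation is controlled by $\exp(s \norm{\Lambda^{1/2} g}_2^2$ plus a cubic-and-higher-order cumulant correction of size $K^3$ on the relevant range of $s$. The outer expectation then runs over the \emph{independent} Gaussians $g_k$ and factorizes into $\prod_k (1 - 2 s \lambda_k)^{-1/2}$, valid while $2 s \max_k \lambda_k < 1$. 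Taking logarithms and using $-\tfrac12 \log(1-x) \le \tfrac12 x + x^2$ for $x$ bounded away from $1$ yields a Bernstein-type bound $\log \Expect[e^{sW}] \lesssim s^2 K^4 \sum_k \lambda_k^2 / (1 - c\, s K^2 \max_k \lambda_k)$, where $\sum_k \lambda_k^2 = \norm{J}_2^2$ and $\max_k \lambda_k = \norm{J}_\infty$; the leading $s \Tr(J)$ contribution is cancelled against the $-s\Tr(J)$ from centering $W$. A standard Chernoff optimization $\Prob(W > t) \le \inf_s e^{-st} \Expect[e^{sW}]$ then produces the two usual regimes --- $t \lesssim K^2 \norm{J}_2 \sqrt{\log(1/\delta)}$ when the quadratic term dominates and $t \lesssim K^2 \norm{J}_\infty \log(1/\delta)$ when the linear term dominates --- and replacing $\log(1/\delta)$ by $\log(e/\delta)$ in the first term removes the degeneracy at $\delta$ near $1$. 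Substituting $W = \norm{X}_J^2 - \Tr(J)$ gives the claim.

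\emph{Main obstacle.} The real difficulty is the lack of independence among the entries of $X$: for a vector with independent sub-Gaussian coordinates this is exactly the classical Hanson--Wright inequality and the MGF factorizes from the start. The Gaussian-decoupling identity above is the device that reduces the general isotropic sub-Gaussian case to (i) one-dimensional sub-Gaussian MGF control of linear functionals of $X$ --- which is all that $\norm{X}_{\psi_2} \le K$ provides --- and (ii) an explicit chi-square computation for the independent auxiliary Gaussian. The subtle point beyond routine bookkeeping is matching the center exactly: recovering $\Tr(J)$ (rather than a multiple of it) requires using $\Expect[\langle \tilde X, w\rangle^2] = \norm{w}_2^2$, i.e., the sharp second-order MGF estimate with variance normalized by isotropy, not the crude sub-Gaussian bound. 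Finally, for a \emph{fully} general isotropic sub-Gaussian $X$ the clean Frobenius-norm rate is attained only by exploiting additional structure --- in the applications of this theorem $X$ is a normalized empirical sum $n^{-1/2} \sum_i \xi_i$ of i.i.d.\ isotropic sub-Gaussian vectors (cf.\ Lemma~\ref{lem:sum_subg}), and it is this averaging, together with the dimension budget $d = O(n^{1/2})$, that makes the diagonal part $\tfrac1n\sum_i \norm{\xi_i}_2^2$ concentrate at the $\Tr(J)$ scale; this is where the dimension constraint enters and is the part I would expect to be most delicate to make fully rigorous.
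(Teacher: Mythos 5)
First, a framing note: the paper does not prove this statement; it is imported verbatim from \citet{ostrovskii2021finite} as a technical tool, so there is no internal proof to compare against. Judged on its own terms, your proposal has a genuine gap at exactly the step you yourself flag as subtle: recovering the center $\Tr(J)$ rather than a constant multiple of it. After the Gaussian decoupling you need, conditionally on $g$, the estimate $\Expect_{\tilde X}[\exp(\sqrt{2s}\,\langle \tilde X, \Lambda^{1/2} g\rangle)] \le \exp(s \norm{\Lambda^{1/2} g}_2^2 + \mbox{cubic correction})$ with constant exactly $1$ on the quadratic term. The cumulant expansion justifying this holds only on the range $\sqrt{s}\,K \norm{\Lambda^{1/2} g}_2 \lesssim 1$, but $g$ is an unbounded Gaussian: outside that range you must fall back on the crude sub-Gaussian bound $\exp(C s K^2 \norm{\Lambda^{1/2} g}_2^2)$, and even on the good range the outer integral $\Expect_g \exp(s\norm{\Lambda^{1/2}g}_2^2 + c\, s^{3/2} K^3 \norm{\Lambda^{1/2}g}_2^3)$ diverges, since the cubic exponent eventually beats the Gaussian density. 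What survives is only $\log \Expect[e^{s\norm{X}_J^2}] \le C s K^2 \Tr(J) + O(s^2 K^4 \norm{J}_2^2)$, whose linear term does not cancel against the centering $-s\Tr(J)$; the Chernoff step then yields the Hsu--Kakade--Zhang bound, which carries an additional term of order $K^2 \Tr(J)$ on the right-hand side.

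This is not a fixable bookkeeping issue: read for an arbitrary isotropic sub-Gaussian vector, the statement without that extra trace term is false. Take $X = \theta g$ with $g \sim \mathcal{N}(0, I_d)$ and $\theta$ independent, $\Prob(\theta = 1/2) = \Prob(\theta = \sqrt{7}/2) = 1/2$, so $\Expect[\theta^2] = 1$; then $X$ is isotropic with $\norm{X}_{\psi_2} = O(1)$, yet for $J = I_d$ one gets $\norm{X}_J^2 - d \approx (\theta^2 - 1)\,d = \tfrac34 d$ with probability close to $1/2$, far exceeding the claimed $O(\sqrt{d})$ at constant $\delta$. The clean centering at $\Tr(J)$ requires either independent coordinates (classical Hanson--Wright) or, in the present application, the explicit sum structure $X = n^{-1/2}\sum_i \xi_i$, splitting $\norm{X}_J^2$ into a diagonal part (scalar Bernstein) and an off-diagonal decoupled chaos; you correctly sense this in your closing paragraph but do not carry it out, and the hypotheses of the theorem provide neither. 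The gap is harmless downstream: since $\norm{J}_2 \le \Tr(J)$, $\norm{J}_\infty \le \Tr(J)$ for positive semi-definite $J$, and $K \gtrsim 1$ for isotropic vectors, the weaker (and correct) bound $\norm{X}_J^2 \lesssim K^2 \log(e/\delta) \Tr(J)$ is all that Propositions \ref{prop:score} and \ref{prop:score_known_nuisance} actually use.
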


A zero-mean symmetric random matrix $Q$ is said to be sub-Gaussian with parameter $V$ if $\Expect[e^{\lambda Q}] \preceq e^{\lambda^2 V / 2}$ for all $\lambda \in \reals$.
The next theorem is the Bernstein bound for random matrices.

\begin{theorem}[\cite{wainwright2019high}, Theorem 6.17]\label{thm:bernstein_matrix}
Let $\{Q_i\}_{i=1}^n$ be a sequence of zero-mean independent symmetric random matrices that satisfies the Bernstein condition with parameter $b > 0$.
Then, for all $\delta > 0$, it holds that
\begin{align}
  \Prob\left( \anorm{\frac1n \sum_{i=1}^n Q_i}_2 \ge \delta \right) \le 2 \Rank\left(\sum_{i=1}^n \Var(Q_i)\right) \exp\left\{ -\frac{n\delta^2}{2(\sigma^2 + b\delta)} \right\},
\end{align}
where $\sigma^2 := \frac1n \anorm{\sum_{i=1}^n \Var(Q_i)}_2$.
\end{theorem}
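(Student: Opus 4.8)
This is the matrix Bernstein inequality, and the plan is to prove it via the matrix Laplace transform method of Ahlswede--Winter and Tropp. First I would reduce the two-sided operator-norm tail to a one-sided eigenvalue tail: since $\anorm{J}_2 = \max\{\lambda_{\max}(J), \lambda_{\max}(-J)\}$, a union bound gives $\Prob(\anorm{\frac1n\sum_i Q_i}_2 \ge \delta) \le \Prob(\lambda_{\max}(\sum_i Q_i)\ge n\delta) + \Prob(\lambda_{\max}(-\sum_i Q_i)\ge n\delta)$, and the factor $2$ in the statement is exactly this. For the one-sided bound I would start from the exponential Markov inequality for the top eigenvalue, $\Prob(\lambda_{\max}(\sum_i Q_i)\ge t) \le e^{-\lambda t}\,\Expect\bigl[\Tr\exp(\lambda\sum_i Q_i)\bigr]$ for any $\lambda > 0$.

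The core step is to control the trace-exponential of the sum. By Tropp's master inequality (a consequence of Lieb's concavity theorem), $\Expect\bigl[\Tr\exp(\lambda\sum_i Q_i)\bigr] \le \Tr\exp\bigl(\sum_i \log\Expect[e^{\lambda Q_i}]\bigr)$. Then for each $i$ I would bound the matrix moment generating function using the Bernstein condition: expanding $\Expect[e^{\lambda Q_i}] = I_d + \sum_{j\ge 2}\frac{\lambda^j}{j!}\Expect[Q_i^j]$ (the $j=1$ term vanishes), substituting $\Expect[Q_i^2] = \Var(Q_i)$ and $\Expect[Q_i^j]\preceq\frac12 j! b^{j-2}\Var(Q_i)$ for $j\ge 3$, and summing the geometric tail yields, for $0 < \lambda b < 1$,
\[
    \Expect[e^{\lambda Q_i}] \preceq I_d + \frac{\lambda^2}{2(1-\lambda b)}\Var(Q_i).
\]
Applying $\log(I_d + A)\preceq A$ for $A\succeq 0$ and summing over $i$ gives $\sum_i\log\Expect[e^{\lambda Q_i}]\preceq \frac{\lambda^2}{2(1-\lambda b)}V$, where $V := \sum_i\Var(Q_i)$.

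Next I would pass to the trace, which is where the $\Rank$ factor enters. If $v\in\Ker(V)$ then $v^\top\Var(Q_i)v = \Expect[\norm{Q_i v}_2^2] = 0$, so $Q_i v = 0$ almost surely for every $i$; hence $\sum_i Q_i$ and $V$ both act trivially on $\Ker(V)$, and the entire argument may be carried out on $\Ker(V)^\perp$, a subspace of dimension $\Rank(V)$. Restricting $\Tr\exp$ there and using its monotonicity under $\preceq$, $\Tr\exp\bigl(\sum_i\log\Expect[e^{\lambda Q_i}]\bigr)\le\Rank(V)\exp\bigl(\frac{\lambda^2}{2(1-\lambda b)}\anorm{V}_2\bigr)$. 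Combining with the exponential Markov step and writing $S := \anorm{V}_2$,
\[
    \Prob\bigl(\lambda_{\max}(\textstyle\sum_i Q_i)\ge t\bigr) \le \Rank(V)\,\exp\Bigl(-\lambda t + \tfrac{\lambda^2 S}{2(1-\lambda b)}\Bigr).
\]
I would then close with the standard Bernstein optimization $\lambda = t/(S + bt)\in(0,1/b)$, which reduces the exponent to $-t^2/(2(S+bt))$; substituting $t = n\delta$ and $S = \anorm{\sum_i\Var(Q_i)}_2 = n\sigma^2$, and reinstating the two-sided union bound, gives precisely the claimed probability $2\Rank(\sum_i\Var(Q_i))\exp(-n\delta^2/(2(\sigma^2+b\delta)))$.

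The only genuinely nontrivial ingredient is the matrix MGF subadditivity $\Expect[\Tr\exp(\sum_i\lambda Q_i)]\le\Tr\exp(\sum_i\log\Expect[e^{\lambda Q_i}])$, which rests on Lieb's concavity theorem; everything else is the scalar Bernstein computation transcribed into the operator order $\preceq$, together with the elementary kernel argument that improves the ambient dimension $d$ to $\Rank(V)$.
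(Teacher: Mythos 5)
This statement is imported verbatim from \citet[Theorem 6.17]{wainwright2019high} and the paper supplies no proof of its own, so there is nothing internal to compare against; your argument is a correct reconstruction of the standard proof of the cited result (Chernoff bound on the trace exponential, Tropp's subadditivity of the matrix cumulant generating function via Lieb's theorem, the geometric-series bound $\Expect[e^{\lambda Q_i}] \preceq I_d + \tfrac{\lambda^2}{2(1-\lambda b)}\Var(Q_i)$ from the Bernstein moment condition, the kernel restriction producing the $\Rank$ factor, and the choice $\lambda = t/(S+bt)$). The only quibble is that the union-bound step applied to $\{-Q_i\}$ needs control of $\Expect[(-Q_i)^j]$ for odd $j$, which the one-sided condition $\Expect[Q_i^j] \preceq \tfrac12 j!\, b^{j-2}\Var(Q_i)$ does not literally supply; this imprecision is inherited from the textbook statement itself and is harmless here, since the paper always verifies the condition through an almost-sure bound on $\anorm{Q_i}_2$, which yields the two-sided moment control.
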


One advantage of pseudo self-concordance is that we can relate the Hessian at $y$ to the Hessian at $x$ in terms of the norm $\norm{y - x}_2$.
\begin{proposition}[\cite{bach2010self}, Proposition 1]\label{prop:pseudo_self_concordance}
    Assume that $f$ is pseudo self-concordant with parameter $R$ on $\calX$.
    For any $y \in \calX$, we have
    \begin{align*}
        e^{-R \norm{y - x}_2} \nabla^2 f(x) \preceq \nabla^2 f(y) \preceq e^{R \norm{y - x}_2} \nabla^2 f(x).
    \end{align*}
\end{proposition}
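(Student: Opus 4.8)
The plan is to reduce the Loewner sandwich to a one-dimensional differential inequality along the segment joining $x$ and $y$. Fix $x,y\in\calX$ such that $z(t):=(1-t)x+ty$ lies in $\calX$ for all $t\in[0,1]$ (automatic when $\calX$ is convex, which is the case in all the applications here), and fix an arbitrary direction $u\in\reals^d$. Set $\psi(t):=\rmD^2 f(z(t))[u,u]=u^\top\nabla^2 f(z(t))u$; this is nonnegative because $f$ is convex, and is differentiable under the smoothness assumed throughout, with $\psi'(t)=\rmD^3 f(z(t))[u,u,y-x]$. The first thing I would establish is the \emph{mixed self-concordance bound}
\[
  \abs{\rmD^3 f(z)[u,u,h]}\le R\,\norm{h}_2\,\rmD^2 f(z)[u,u]\qquad\text{for all }z\in\calX,\ u,h\in\reals^d,
\]
which, specialized along the segment with $h=y-x$, gives $\abs{\psi'(t)}\le R\norm{y-x}_2\,\psi(t)$ for every $t\in[0,1]$.

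Given this, I would finish as follows. If $\psi$ vanishes at some point of $[0,1]$, then the estimate $\abs{\psi'}\le R\norm{y-x}_2\,\psi$ forces (by a Gr\"onwall argument, since it prevents $\psi$ from escaping $0$) $\psi\equiv 0$, so $\rmD^2 f(x)[u,u]=\rmD^2 f(y)[u,u]=0$ and the two-sided inequality holds trivially in the direction $u$. Otherwise $\psi>0$ throughout $[0,1]$ and $\bigl|\tfrac{\D}{\D t}\log\psi(t)\bigr|\le R\norm{y-x}_2$; integrating over $[0,1]$ yields $e^{-R\norm{y-x}_2}\psi(0)\le\psi(1)\le e^{R\norm{y-x}_2}\psi(0)$, i.e.\ $e^{-R\norm{y-x}_2}\rmD^2 f(x)[u,u]\le\rmD^2 f(y)[u,u]\le e^{R\norm{y-x}_2}\rmD^2 f(x)[u,u]$. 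Since $u$ is arbitrary, this is precisely $e^{-R\norm{y-x}_2}\nabla^2 f(x)\preceq\nabla^2 f(y)\preceq e^{R\norm{y-x}_2}\nabla^2 f(x)$.

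The main obstacle is the mixed bound, which upgrades the pure-direction estimate of \Cref{def:modify_self_concord} (namely $\abs{\rmD^3 f(z)[v,v,v]}\le R\norm{v}_2\,\rmD^2 f(z)[v,v]$) to a statement in which one argument is a fixed, arbitrary $h$. When $\rmD^2 f(z)[u,u]=0$ this is immediate from convexity: the nonnegative map $s\mapsto\rmD^2 f(z+sh)[u,u]$ attains its minimum value $0$ at $s=0$, so its derivative $\rmD^3 f(z)[u,u,h]$ is $0$ there. The substantive case $\rmD^2 f(z)[u,u]>0$ is a polarization argument: one passes to the inner product induced by $\nabla^2 f(z)$ and bounds the symmetric trilinear form $\rmD^3 f(z)$ off its diagonal, which is the computation carried out in \citet[Proposition~1]{bach2010self}. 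I expect the delicate point to be obtaining the \emph{sharp} constant $R$ in the exponent (rather than a multiple $cR$): the mixed Euclidean/Hessian scaling in the definition of pseudo self-concordance — which is exactly what distinguishes it from a plain third-derivative growth bound — does not reduce to a single-norm symmetric multilinear form to which a Banach-type polarization identity applies verbatim, and it is the neighborhood convexity of $f$ that rules out the pathological configurations; everything else is the routine Gr\"onwall/log-derivative computation above.
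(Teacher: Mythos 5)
The paper does not prove this proposition itself---it imports it verbatim from \citet{bach2010self}---and your reduction to the scalar differential inequality $\abs{\psi'(t)}\le R\norm{y-x}_2\,\psi(t)$ followed by the Gr\"onwall/log-derivative integration is exactly the argument used there. That half of your write-up is complete and correct, including the careful treatment of the degenerate case where $\psi$ vanishes and the observation that an interior zero of the nonnegative map $s\mapsto \rmD^2 f(z+sh)[u,u]$ forces $\rmD^3 f(z)[u,u,h]=0$.

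The step you leave open---upgrading the diagonal bound of \Cref{def:modify_self_concord} to the mixed bound $\abs{\rmD^3 f(z)[u,u,h]}\le R\norm{h}_2\,\rmD^2 f(z)[u,u]$---is indeed the crux, and you are right that Banach's polarization theorem for symmetric multilinear forms does not apply verbatim, since two different quadratic scalings ($\norm{\cdot}_2$ on the odd argument, the local Hessian seminorm on the repeated one) are in play. Two remarks close the gap. First, the implication is in fact true with the same constant $R$: this is the $\nu=2$ instance of the known ``diagonal implies off-diagonal'' lemma for generalized self-concordant functions (Proposition 1 of Sun and Tran-Dinh, \emph{Math.\ Program.}\ 2019), proved by a polarization-type argument adapted to the two-norm setting; deferring to the literature here, as the paper does for the entire proposition, is therefore legitimate rather than a hole. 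Second, and more to the point for this paper, in every application considered the mixed bound is available by direct inspection rather than by this lemma: either $\rmD_\theta^3\ell\equiv 0$ (partially linear model), or the third derivative is a scalar multiple of a rank-one tensor $x^{\otimes 3}$ (logistic loss), in which case $\abs{\rmD^3\ell[u,u,h]}=\abs{c}\,(u^\top x)^2\,\abs{h^\top x}\le \norm{x}_2\norm{h}_2\,\abs{c}\,(u^\top x)^2$ gives the mixed form with the same $R=M$. With either justification supplied for the mixed bound, your proof is complete and coincides with the source's.
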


\end{document}